\documentclass[11pt]{article}
\pdfoutput=1
\usepackage{longtable}
\usepackage{wrapfig}
\usepackage[utf8]{inputenc} % allow utf-8 input
\usepackage[T1]{fontenc}    % use 8-bit T1 fonts
\usepackage[colorlinks,
            linkcolor=red,
            anchorcolor=blue,
            citecolor=blue
            ]{hyperref}
\usepackage{url}            % simple URL typesetting
\usepackage{booktabs}       % professional-quality tables
\usepackage{amsfonts}       % blackboard math symbols
\usepackage{nicefrac}       % compact symbols for 1/2, etc.
\usepackage{xcolor,colortbl}         % colors
\usepackage{wrapfig}
\usepackage{caption}
\usepackage{enumitem}
\usepackage{tabularx}
\usepackage{mylatexstyle}
\usepackage[compact]{titlesec}
\usepackage{multirow}
\usepackage{setspace}
\usepackage{fullpage}

\usepackage[protrusion=true,expansion=true]{microtype}
\usepackage{pbox}
\usepackage{setspace}
\usepackage{tabularx}
\usepackage{float}
\usepackage{wrapfig,lipsum}
\usepackage{enumitem}
\usepackage{graphicx}
\usepackage{subfigure}
\usepackage{pgfplots}
\usetikzlibrary{arrows,shapes,snakes,automata,backgrounds,petri}
\usepackage{booktabs} % for professional tables

%%%%%%%%%%%%%%%%%%%%%%%%%%

%\newcommand{\red}{\color{red}}

\allowdisplaybreaks
\usepackage{colortbl}
\definecolor{Gray}{rgb}{0.8, 0.9, 1}
\definecolor{LightGray}{gray}{0.9}
\definecolor{codepurple}{rgb}{0.58,0,0.82}
\definecolor{VioletRed}{rgb}{0.915686,0.025490,0.364706}

\def \method {MARS-M}

\usepackage{enumitem}
\usepackage{colortbl}
\definecolor{LightCyan}{rgb}{0.8, 0.9, 1}
\definecolor{Gray}{gray}{0.86}

\ifdefined\final
\usepackage[disable]{todonotes}
\else
\usepackage[textsize=tiny]{todonotes}
\fi
\setlength{\marginparwidth}{0.8in}

%%%%%%%%%%%%%%%%%%%%%%%%%%%%%%%%%%%%%

% \setlength{\marginparwidth}{0.8in}
% \setlength{\abovecaptionskip}{0.5em}
% \setlength{\belowcaptionskip}{0.5em}
% \setlength{\parskip}{0.5em}
% \usepackage[compact]{titlesec}
% \usepackage{sidecap}
% \titlespacing*{\section}{0pt}{*0.3}{*0.3}
% \titlespacing*{\subsection}{0pt}{*0.3}{*0.3}
% \titlespacing*{\subsubsection}{0pt}{*0.3}{*0.3}

\title{\huge MARS-M: When Variance Reduction Meets Matrices}

% The \author macro works with any number of authors. There are two commands
% used to separate the names and addresses of multiple authors: \And and \AND.
%
% Using \And between authors leaves it to LaTeX to determine where to break the
% lines. Using \AND forces a line break at that point. So, if LaTeX puts 3 of 4
% authors names on the first line, and the last on the second line, try using
% \AND instead of \And before the third author name.
    
\author
{
    Yifeng Liu\thanks{Equal contribution} \thanks{Department of Computer Science, University of California, Los Angeles, CA, USA; email: liuyifeng@cs.ucla.edu} 
	~~~
    Angela Yuan\footnotemark[1] \thanks{Department of Computer Science, University of California, Los Angeles, CA, USA; email: hzyuan@cs.ucla.edu} 
	~~~
	Quanquan Gu\footnotemark[2] \thanks{Corresponding Author, Department of Computer Science, University of California, Los Angeles, CA, USA; email: qgu@cs.ucla.edu}
}

\date{}

\begin{document}

\maketitle

\begin{abstract} Matrix-based preconditioned optimizers, such as Muon, have recently been shown to be more efficient than scalar-based optimizers for training large-scale neural networks, including large language models (LLMs). Recent benchmark studies of LLM pretraining optimizers have demonstrated that variance-reduction techniques such as MARS can substantially speed up training compared with standard optimizers that do not employ variance reduction.
In this paper, we introduce MARS-M, a new optimizer that integrates MARS-style variance reduction with Muon. Under standard regularity conditions, we prove that MARS-M converges to a first-order stationary point at a rate of $\tilde{\mathcal{O}}(T^{-1/3})$, improving upon the $\tilde{\mathcal{O}}(T^{-1/4})$ rate attained by Muon. Empirical results on language modeling and computer vision tasks demonstrate that MARS-M consistently yields lower losses and improved performance across various downstream benchmarks. The implementation of \text{\method} is available at \url{https://github.com/AGI-Arena/MARS/tree/main/MARS_M}.
\end{abstract}

%\iffalse

\section{Introduction}
The development of preconditioned gradient methods, such as AdamW~\citep{Adamw}, AdaFactor~\citep{shazeer2018adafactor}, and Lion~\citep{Lion}, has played an important role in the advancement of large-scale deep learning. Many prominent large language models (LLMs), including ChatGPT~\citep{chatgpt}, LLaMA-3~\citep{llama3}, and DeepSeek-R1~\citep{guo2025deepseek}, are trained with adaptive gradient methods such as Adam~\citep{adam} and AdamW. Recently, matrix-based preconditioned optimization methods, such as Shampoo~\citep{gupta2018shampoo}, SOAP~\citep{SOAP}, and Muon~\citep{muon,liu2025muon}, have been introduced to accelerate the training of large models such as Kimi K2~\citep{team2025kimi} and GLM-4.5~\citep{zeng2025glm}. Unlike vector-based methods, matrix-based approaches operate directly on parameter matrices without flattening them, thereby preserving their inherent 2D structure and the underlying optimization geometry.

On the other hand, stochastic optimization methods are often hindered by high variance in stochastic gradients during training, which can slow convergence and degrade stability. To address this issue, numerous variance-reduction techniques have been proposed, including SAG~\citep{roux2012stochastic}, SVRG~\citep{johnson2013accelerating}, SARAH~\citep{SARAH1,SARAH2}, SPIDER~\citep{fang2018spider}, SNVRG~\citep{zhou2020stochastic}, and STORM~\citep{STORM}, to name a few. However, these methods have seen limited success in training large-scale deep neural networks, largely due to incompatibilities with modern deep-learning practice and neural network architectures~\citep{defazio2019ineffectiveness}. To make variance reduction work for training large-scale deep neural networks and LLMs, MARS~\citep{yuan2024mars} was recently proposed. It introduces a scaling parameter into the STORM optimizer~\citep{STORM}, effectively resolving the incompatibility between variance-reduction techniques and preconditioned optimization methods, which have demonstrated superior performance in both language modeling and computer vision tasks. A similar scaling idea has also been proposed for SVRG in \citet{yin2023coefficient}.

Recent benchmarks of optimizers for LLM pretraining~\citep{wen2025fantastic,semenov2025benchmarking} have empirically shown that matrix-based optimizers (e.g., Shampoo, SOAP, and Muon) outperform scalar-based optimizers (e.g., AdamW and Lion), and that variance-reduction approaches (e.g., MARS and NadamW~\citep{dozat2016incorporating}) can yield additional, discernible speedups. Therefore, a natural question arises:
\begin{center}
\emph{Can we achieve the best of both worlds by combining matrix-based optimizers with variance reduction such as MARS?}
\end{center}
It is worth noting that the original MARS work~\citep{yuan2024mars} proposed MARS-Shampoo, which attempts to combine a matrix-based optimizer with variance reduction. However, MARS-Shampoo performs worse than vector-based variants such as MARS-AdamW, leaving open the question of whether variance reduction can be effectively integrated with matrix-based optimizers. In this paper, we give an affirmative answer by introducing MARS-M, a matrix-based MARS optimizer that integrates variance reduction with the Moonlight version~\citep{liu2025muon} of Muon. Additionally, we propose an approximate version of MARS-M as a practical variant, which also serves as a bridge between variance-reduction techniques and traditional preconditioned optimizers.

In summary, our contributions are highlighted as follows:
\begin{itemize}
    \item We propose MARS-M, an instantiation of MARS tailored to the Moonlight optimizer. In addition, we develop an approximate version of MARS-M to further accelerate training. We show the connections and differences between MARS-M and MARS-Shampoo. We also show that the approximate version of MARS-M can be seen as a variant of Moonlight with adjusted momentum parameters.
    \item We provide a convergence analysis of MARS-M and show that it attains a convergence rate of $\tilde{\mathcal{O}}(T^{-1/3})$, which improves upon the previously established $\mathcal{O}(T^{-1/4})$ rate achieved by Muon~\citep{li2025note,shen2025convergence,pethick2025training}.
    \item Empirically, we evaluate the performance of MARS-M as well as the Muon (Moonlight) optimizer for training GPT-2~\citep{radford2019language} series models on OpenWebText and FineWeb-Edu 100B datasets. The experimental results show that MARS-M delivers consistent improvements in training and validation losses. Moreover, on downstream tasks, MARS-M achieves better performance on benchmarks such as Hellaswag~\citep{hellaswag} and SciQ~\citep{SciQ} than baseline optimizers. Additionally, MARS-M achieves higher test accuracy than AdamW and Muon on computer vision tasks.
\end{itemize}

\noindent\textbf{Notation} We use bold capital letters $\Xb,\Yb,\cdots$ to denote matrices. For a matrix $\Wb\in \mathbb{R}^{m\times n}$, we use $\|\Wb\|_F=\sqrt{\sum_{i=1}^{m}\sum_{j=1}^n \Wb_{ij}^2}$ to denote its Frobenius norm. In this paper, we use $\mathbf{X}_t\in\mathbb{R}^{m\times n}$ to represent the parameters of the language model at training step $t$, where the training data for each step is a sequence of independent random variables $\xi_1, \dots, \xi_T \in \mathbf{\Xi}$. For a differentiable objective function $f$, we assume that the expected value of $f(\mathbf{X}, \xi_t)$ given $\mathbf{X}$ is $F(\mathbf{X})$ for all $\mathbf{X}$ and $t$. In this paper, we may explicitly denote the input data, since variance-reduction algorithms may utilize both the training data from the current step ($\xi_t$) and the previous step ($\xi_{t-1}$) to compute different gradients for the same parameter. MARS-M is directly inspired by the success of matrix-based optimizers Muon and Moonlight.

\section{Related Work}

\noindent\textbf{Preconditioned gradient methods} Preconditioned gradient methods, inspired by Newton’s method, are a powerful class of optimization algorithms widely used in training deep neural networks. To reduce the computational and memory costs of computing and storing the full Hessian matrix, many approximate preconditioners have been proposed, including diagonal approximations~\citep{adagrad,Adamw,liusophia,yao2021adahessian} and sketched approximations~\citep{erdogdu2015convergence,gonen2015faster}. Despite these advances, only a few matrix-based optimization methods have been practical at large scale. A notable breakthrough was the Shampoo optimizer~\citep{gupta2018shampoo}, which demonstrated the practical feasibility of incorporating second-order information in large-scale training.
To reduce the number of hyperparameters and computational overhead of Shampoo,~\citet{SOAP} proposed SOAP, which implements AdamW in the eigenbasis of Shampoo's preconditioners, significantly improving training efficiency in terms of both iteration count and wall-clock time. Recently, Muon~\citep{muon} was proposed by leveraging a Newton--Schulz approximation of the singular value decomposition (SVD) of the gradient momentum, while Moonlight~\citep{liu2025muon} extends Muon by introducing weight decay and coupling step sizes across components optimized by AdamW and Muon, thereby improving hyperparameter search efficiency. Building on this line of research, PolarGrad~\citep{lau2025polargrad} introduces a preconditioning framework based on the polar decomposition of the gradient matrix, generalizing the principles of Muon. Scion~\citep{pethick2025training} further unifies Muon and related methods under a linear minimization oracle (LMO) framework, and Gluon~\citep{riabinin2025gluon} extends both Muon and Scion by incorporating momentum-based generalizations. Today, matrix-based optimizers are becoming increasingly popular for training industrial large language models, including Kimi K2~\citep{team2025kimi} and GLM-4.5~\citep{zeng2025glm}.

To understand the convergence of Muon,~\citet{li2025note},~\citet{shen2025convergence}, and~\citet{pethick2025training} showed that Muon has a convergence rate of $\mathcal{O}(T^{-1/4})$. \citet{an2025asgo} and~\citet{shen2025convergence} also analyzed the convergence of Muon without momentum. \citet{kovalev2025understanding} investigated the convergence rate of Muon under additional assumptions such as spectral norm-based Lipschitz smoothness and second-order smoothness, while~\citet{li2025note},~\citet{sfyraki2025lions}, and~\citet{Sato2025ConvergenceBA} analyzed the convergence rate under large batch sizes. Our analysis of MARS-M pushes the theoretical frontier of Muon-based optimization algorithms.

\noindent\textbf{Variance Reduction Methods.}  The earliest efforts to accelerate stochastic gradient descent (SGD) through variance reduction include SAG~\citep{roux2012stochastic} and SDCA~\citep{shalev2013stochastic}. These were soon followed by simpler yet equally effective algorithms such as SVRG~\citep{johnson2013accelerating} and SAGA~\citep{defazio2014saga}, which achieved the same improved convergence guarantees. Building on this line of work, SARAH~\citep{SARAH1} introduced a biased recursive gradient estimator that reduces memory requirements while retaining optimal complexity bounds for convex optimization. Additionally, variance reduction has also been studied in conjunction with preconditioning in the convex setting~\citep{frangella2024promise,derezinski2023stochastic}. In the non-convex regime, algorithms like SVRG~\citep{allen2016improved,reddi2016stochastic} and SARAH~\citep{SARAH2} paved the way for methods such as SPIDER~\citep{fang2018spider}, which integrates normalized gradient descent~\citep{nesterov2013introductory,hazan2015beyond} with stochastic path-integral differential estimator, and SNVRG~\citep{zhou2020stochastic}, which leverages multiple reference points to enhance variance reduction. SpiderBoost~\citep{wang2019spiderboost} further improved SPIDER by permitting large constant step sizes without compromising near-optimal oracle complexity. Subsequently, STORM~\citep{STORM} streamlined SPIDER and SNVRG via stochastic recursive momentum, and was later extended into a parameter-free variant, STORM+~\citep{levy2021storm+}. Recent work has also explored combining variance reduction with adaptive gradient methods. For instance, Adam$^+$~\citep{liu2020adam} reduces variance in Adam by estimating gradients only at extrapolated points, while SuperAdam~\citep{huang2021super} and VRAdam~\citep{li2024smoothness} integrate variance reduction into AdamW to accelerate convergence. AdaSPIDER~\citep{kavis2022adaptive} extends SPIDER with adaptive step sizes. However, these variance-reduced adaptive optimization algorithms have been evaluated only on relatively simple computer vision and natural language modeling benchmarks with modest model sizes. To our knowledge, MARS~\citep{yuan2024mars} is the first method to achieve stellar performance on large language models. This work pointed out that previous algorithms introduced excessive gradient correction and it incorporates scaled gradient correction into adaptive gradient methods, achieving better performances in language modeling and computer vision tasks. MARS-M is built upon MARS, while utilizing matrix-based optimizer Muon/Moonlight.

\section{Preliminaries}
In this section, we present the problem setting and relevant preliminaries, including a review of the Muon and MARS optimizers.

In this paper, we consider minimizing an objective function $F(\cdot): \RR^{m\times n} \rightarrow \RR$ as follows:
\begin{align}
\min_{\Xb} F(\Xb) = \EE_{\bxi\sim \cD}[f(\Xb,\bxi)],
\end{align}
where $f(\Xb,\bxi)$ is a possibly nonconvex loss function, $\Xb \in \RR^{m\times n}$ is a matrix-variate optimization variable, and $\bxi$ is a random vector (e.g., a training data point) drawn from an unknown data distribution $\cD$.
We assume access to a first-order oracle, which returns an unbiased estimator of the gradient $\EE[\nabla f(\Xb, \bxi)] = \nabla F(\Xb)$. Throughout the paper, without loss of generality, we assume $m\geq n$.

\subsection{Muon}
Muon~\citep{muon} was proposed to exploit the 2D geometric information of model parameter matrices during optimization. Given the momentum $\beta \in (0,1)$ and learning rate $\eta_t>0$, the update rule for Muon is as follows:
\begin{align}
    \Mb_t &= \beta \Mb_{t-1} + \nabla f(\Xb_{t},\bxi_{t}),\label{eq:muon_m}
        \\
    \mathbf{O}_t &= \text{NewtonSchulz}\left(\Mb_t\right),\label{eq:muon_NS}
        \\
    \Xb_{t+1} &= \Xb_t-\eta_t\mathbf{O}_t.\label{eq:muon_final}
\end{align}
Here Newton--Schulz iteration~\citep{bernstein2024old} is used to approximate $\Ub_t \Vb_t$, where $\Ub_t \bSigma_t \Vb_t=\Mb_t$ is the singular value decomposition (SVD) of the momentum matrix.
In practice, NewtonSchulz iteration is usually applied to $\tilde{\Mb}_t=\beta \Mb_t + \nabla f(\Xb_{t},\bxi_{t})$ rather than $\Mb_t$\footnote{In~\citet{muon}'s newest implementation, both $\nabla f(\Xb_{t},\bxi_{t})$ in $\tilde{\Mb_t}$ and in~\eqref{eq:muon_m} are substituted with $(1-\beta)\nabla f(\Xb_t,\bxi_t)$. However, in this paper, we just focus on the original algorithm.}.

It is worth noting that Muon is designed primarily for optimizing matrix-like parameters, while AdamW is often applied to vector-like parameters, including embeddings, the language model head, and RMSNorm. However, empirical evidence~\citep{yuan2024mars,semenov2025benchmarking} has shown that the original version of Muon does not perform as well in practice. \citet{liu2025muon} attributed the inferior performance to mismatched update magnitudes across the two types of parameters (i.e., matrix-like and vector-like parameters). In particular, the RMS norm of AdamW updates typically ranges from approximately 0.2 to 0.4 during LLM training. Based on this observation, they proposed a variant of Muon (referred to as ``Moonlight'' in this paper):
\begin{align}
    \Ub_t &= \beta \Ub_{t-1} + \nabla f(\Xb_{t}, \bxi_{t}),\label{eq:moonlight_u}
        \\
    \Mb_t &= \beta \Ub_t + \nabla f(\Xb_{t},\bxi_{t}),\label{eq:moonlight_m}
        \\
    \mathbf{O}_t &= \text{NewtonSchulz}\left(\Mb_t\right),\label{eq:moonlight_NS}
        \\
    \Xb_{t+1} &= \Xb_t-\eta_t(0.2\cdot\mathbf{O}_t\cdot\sqrt{\max(m, n)}+\lambda\Xb_t),\label{eq:moonlight_final}
\end{align}
where $\Xb_t\in\mathbb{R}^{m\times n}$ is the matrix-like parameter to be optimized, and $\lambda>0$ is the decoupled weight decay parameter. Such a corrected version demonstrates great performance in a lot of benchmarks~\citep{semenov2025benchmarking,wen2025fantastic} and successfully applied in training industrial large language models~\citep{team2025kimi,zeng2025glm}.
\subsection{MARS}
To accelerate the convergence of SGD, a lot of variance reduction techniques have been proposed trying to reduce the variance in update and achieve stabler and faster training. 

% A typical form for variance reduction is proposed by \citet{johnson2013accelerating}:
% \begin{align*}
%     \mb_t = \nabla f(\xb_t, \bxi_t) - \nabla f(\tilde{\xb}, \bxi_t) + \nabla F(\tilde{\xb}).
% \end{align*}
% Here, $\tilde{\xb}$ denotes an anchoring (or reference) point updated periodically and less frequently, and it is assumed that $\nabla F(\xb)$ is accessible as an unbiased estimator of the gradient: $\nabla F(\xb)=\EE[\nabla f(\xb, \bxi)]$.
% The correction terms besides gradient, $- \nabla f(\tilde{\xb}, \bxi_t) + \nabla F(\tilde{\xb})$, effectively controls the variance $\mb_t$, which can be bounded by $\|\xb_t - \tilde{\xb}\|_2$. As the algorithm progresses and both $\xb_t$ and $\tilde{\xb}$ approach a stationary point, this variance naturally diminishes.

% Later, SARAH \citep{SARAH1} and SPIDER \citep{fang2018spider} improved this formula by removing the anchor point and form the following momentum update rule:
% \begin{align}
%     \mb_t &= \nabla f(\xb_t, \bxi_t) - \nabla f(\xb_{t-1}, \bxi_t) + \mb_{t-1},\nonumber
%         \\
%     \xb_{t+1} &= \xb_t - \eta_t \mb_t.\nonumber
% \end{align}
% Here $\mb_t$ is the variance-reduced first-order momentum and $\eta_t$ is the step size.
% The stochastic gradient difference term, $\nabla f(\xb_t, \bxi_t) - \nabla f(\xb_{t-1}, \bxi_t)$, serves to cancel out the common noise induced by $\bxi_t$. 
% Nevertheless, the estimator $\mb_t$ still requires periodically refreshing using either the exact full gradient $\nabla F(\xb_t)$ or a sufficiently large mini-batch approximation to ensure stability and accuracy.

A typical form of variance reduction is exemplified by STORM~\citep{STORM}, which introduces a gradient-correction term in the momentum update rule:
\begin{align}
    \mb_t &=
    \beta \mb_{t-1} +
    (1 - \beta) 
    \Big[
        \nabla f(\xb_t, \bxi_t)\nonumber\\&\qquad+
        \underbrace{\frac{\beta}{1 - \beta}
        \big(
        \nabla f(\xb_t, \bxi_t)-
        \nabla f(\xb_{t-1}, \bxi_t)
        \big)}_{\text{gradient correction}}
    \Big].\nonumber
\end{align}
where $\beta>0$ is the momentum parameter. Here the noise brought by the randomness of data can be canceled out by the stochastic gradient difference term $\frac{\beta}{1-\beta}(\nabla f(\xb_t, \bxi_t) -\nabla f(\xb_{t-1}, \bxi_t))$. Based on this term, the estimation of the true gradient for the parameter in the last step $\nabla F(\xb_{t-1})$ can be transferred to the true gradient for the parameter in the current step $\nabla F(\xb_{t})$.

Based on STORM, MARS \citep{yuan2024mars} (see Algorithm~\ref{alg:mars}) introduces a scaling parameter for the gradient-correction term, yielding the corrected gradient:
\begin{align}
    \bc_t =
        \nabla f(\xb_t, \bxi_t)+
        {\color{red}\gamma_t}\underbrace{\frac{\beta}{1 - \beta}
        \big(
        \nabla f(\xb_t, \bxi_t) -
        \nabla f(\xb_{t-1}, \bxi_t)
        \big)}_{\text{scaled gradient correction}},
\end{align}
where $\gamma_t>0$ is the scaling coefficient. Moreover, it also applies gradient clipping to the corrected gradient to improve training stability:
\begin{align}
        \tilde{\mathbf{c}}_t = \text{Clip}(\mathbf{c}_t,1) =  \begin{cases}
\frac{\mathbf{c}_t}{\|\mathbf{c}_t\|_2} & \text{if } \|\mathbf{c}_t\|_2 > 1,\\
\mathbf{c}_t & \text{otherwise}.
\end{cases}\label{eq:grad_clip}
\end{align}
For the ease of exposure, the MARS algorithm is summarized in Algorithm~\ref{alg:mars}, where line 7 is the general formula for adaptive gradient methods such as AdamW~\citep{Adamw}, Lion~\citep{Lion} and Shampoo~\citep{gupta2018shampoo}, with an approximated Hessian matrix $\Hb_t$. As will be seen later, our work can be viewed as a tailored version of MARS for Muon/Moonlight.

\begin{algorithm}[t!]
\caption{MARS}
\begin{algorithmic}[1]
\label{alg:mars}
    \STATE \textbf{input:} $\xb_0, \beta, \{\gamma_t\}, \{\eta_t\}$
    \STATE Set $\mb_0\leftarrow \mathbf{0}$ and $\xb_1\leftarrow\xb_0$
    \FOR {$t=1,$ \textbf{to} $ T$}
        \STATE Sample $\bxi_t$ and let $\mathbf{c}_t = \nabla f(\xb_t, \bxi_t)+\gamma_t \frac{\beta}{1-\beta} \big(\nabla f(\xb_t, \bxi_t)-\nabla f(\xb_{t-1}, \bxi_t)\big)$
        \STATE if $\|\mathbf{c}_t\|_2 > 1$, then $\Tilde{\mathbf{c}}_t=\frac{\mathbf{c}_t}{\|\mathbf{c}_t\|_2}$ else $\tilde{\mathbf{c}}_t = \mathbf{c}_t$
        \STATE $\mb_t = \beta \mb_{t-1} + (1-\beta)\Tilde{\mathbf{c}}_t$
        \STATE $\xb_{t+1} =\arg\min_{\xb}\left\{\eta_t \left\langle \mb_t, \xb\right\rangle+\frac12 \|\xb - \xb_t\|_{\Hb_t}^2\right\}$\label{mars-update}
    \ENDFOR
\end{algorithmic}
\end{algorithm}

\section{Method}
In this section, we propose the MARS-M optimizer and establish its convergence guarantees.

\subsection{MARS-M}
 %Comparing~\eqref{eq:SVD} with~\eqref{eq:muon_NS}, it is easy to see that Muon is actually the version with Newton-Schulz approximation.

 We present MARS-M, which is an instantiation of MARS by substituting the update rule~\eqref{eq:muon_final} with~\eqref{eq:moonlight_final}, and the resulted algorithm is shown in Algorithm~\ref{alg:MARS_moonlight}. MARS-M is an instantiation of MARS tailored to the Moonlight optimizer.

It is worth noting that MARS-M is related to MARS-Shampoo in~\citep{yuan2024mars}, which is another instantiation of MARS based on a simplified version of Shampoo. Specifically, the update rule in MARS-Shampoo is instantiated as:
\begin{align}
    \mathbf{U}_t, \mathbf{\Sigma}_t, \mathbf{V}_t &= \text{SVD}(\Mb_t),\nonumber
    \\
    \Xb_{t+1}&=\Xb_t-\eta_t\mathbf{U}_t\mathbf{V}_t^\top.\label{eq:SVD}
\end{align}
In practice, the SVD computation can be substituted by approximate numerical methods such as Newton iteration~\citep{lakic1998computation, higham2008functions,anil2020scalable,bernstein2024old} and Newton-Schulz iteration~\citep{schulz1933iterative,higham2008functions}. The main differences between MARS-M and MARS-Shampoo are that MARS-M scales $\Ob_t \approx \Ub_t\Vb_t^\top$ by $0.2\cdot \sqrt{\max(m,n)}$ and incorporates weight decay $\lambda \Xb_t$, both of which are essential for achieving superior performance according to \citet{liu2025muon}.

\begin{algorithm}[htb!]
\caption{MARS-M}
\label{alg:MARS_moonlight}
\begin{algorithmic}[1]
    \STATE \textbf{input:} $\Xb_0\in\mathbb{R}^{m\times n}, \lambda, \beta, \{\gamma_t\}, \{\eta_t\}$
    \STATE Set $\Mb_0\leftarrow \mathbf{0}$ and $\Xb_1\leftarrow\Xb_0$
    \FOR {$t=1,$ \textbf{to} $ T$}
        \STATE sample $\bxi_t$ and let $\mathbf{C}_t = \nabla f(\Xb_t, \bxi_t)+\gamma_t(\frac{\beta}{1-\beta})\big(\nabla f(\Xb_t, \bxi_t)-\nabla f(\Xb_{t-1}, \bxi_t)\big)$~\label{eq:mars_moonlight_c_t}
        \STATE $\Mb_t = \beta \Mb_{t-1} + (1-\beta)\text{Clip}(\mathbf{C}_t, 1)$~\label{eq:mars_moonlight_m_t}
        \STATE $\Ob_t = \text{NewtonSchulz}(\Mb_t)$
        \STATE $\Xb_{t+1} = \Xb_t - \eta_t(0.2\cdot\Ob_t\cdot\sqrt{\max(m,n)} +  \lambda \Xb_t)$
    \ENDFOR
\end{algorithmic}
\end{algorithm}

\subsection{Approximated MARS-M}
Since computing the stochastic gradients twice is significantly more expensive, in practice, an approximate version of MARS is usually utilized where $\nabla f(\Xb_{t-1}, \bxi_{t})$ is replaced by $\nabla f(\Xb_{t-1}, \bxi_{t-1})$. If clipping on $\mathbf{C}_t$ is ignored, lines 4 to 5 in Algorithm~\ref{alg:MARS_moonlight} can be approximated by:
\begin{align}
    \mathbf{C}_t &= \nabla f(\Xb_t, \bxi_t)\nonumber\\
    &\qquad+\gamma_t\bigg(\frac{\beta}{1-\beta}\bigg)\big(\nabla f(\Xb_t, \bxi_t)-\nabla f(\Xb_{t-1}, \bxi_{t-1})\big),~\label{eq:approx_c_t}\\
    \Mb_t &= \beta \Mb_{t-1} + (1-\beta)\mathbf{C}_t.~\label{eq:approx_m_t}
\end{align}
With some calculation, it can be shown that the approximate version of MARS-M without clipping is equivalent to the following Muon/Moonlight-style update rule:
\begin{align}
    \Ub_t &= \beta \Ub_{t-1} + {\color{red}\frac{(1-\gamma_t)(1-\beta)}{\beta}}\nabla f(\Xb_{t}, \bxi_{t}),\label{eq:mars_moonlight_u}
        \\
    \Mb_t &= \beta \Ub_t + {\color{red}\gamma_t}\nabla f(\Xb_{t},\bxi_{t}),\label{eq:mars_moonlight_m}
        \\
    \mathbf{O}_t &= \text{NewtonSchulz}\left(\Mb_t\right),\nonumber%\label{eq:moonlight_NS}
        \\
    \Xb_{t+1} &= \Xb_t-\eta_t(0.2\cdot\mathbf{O}_t\cdot\sqrt{\max(m, n)}+\lambda\Xb_t).\nonumber%\label{eq:moonlight_final}
\end{align}
By comparing \eqref{eq:mars_moonlight_u} and \eqref{eq:mars_moonlight_m} with \eqref{eq:moonlight_u} and \eqref{eq:moonlight_m}, we can see that approximate MARS-M can be viewed as a variant of Moonlight with adjusted momentum parameters $\color{red}(1-\gamma_t)(1-\beta)/\beta$ and $\color{red}\gamma_t$ (in contrast to the coefficients equal to 1 in Moonlight). 

From another perspective, we note that \eqref{eq:moonlight_u} and \eqref{eq:moonlight_m} in the original Moonlight algorithm can be equivalently expressed as:
\begin{align}
    \mathbf{C}_t &= {\color{blue}\frac{1}{1-\beta}}\nabla f(\Xb_t, \bxi_t)\nonumber\\
    &\qquad+{\color{blue}1}\cdot\bigg(\frac{\beta}{1-\beta}\bigg)\big(\nabla f(\Xb_t, \bxi_t)-\nabla f(\Xb_{t-1}, \bxi_{t-1})\big),~\label{eq:moonlight_c_t}\\
    \Mb_t &= \beta \Mb_{t-1} + (1-\beta)\mathbf{C}_t.~\label{eq:moonlight_m_t}
\end{align}
 Comparing~\eqref{eq:approx_c_t} with~\eqref{eq:moonlight_c_t}, it can be seen that the approximate version of MARS-M corresponds to Moonlight with the stochastic gradient $\nabla f(\Xb_t, \bxi_t)$ being scaled down by a factor of $\color{blue}(1-\beta)$ and setting $\gamma_t = {\color{blue}1}$. The experiments in Section~\ref{sec:exp} empirically demonstrate that this difference in momentum parameters leads to a noticeable impact on the loss. %Nevertheless, it is not equivalent to choosing a smaller learning rate, because SVD and Newton-Schulz iteration are scale invariant so that the scale of $\Ob_t$ is not affected by the change in gradient factor. 

% Furthermore,~\eqref{eq:moonlight_c_t} and~\eqref{eq:moonlight_m_t} are equivalent to 
% \begin{align}
%     \Ub_t &= \beta \Ub_{t-1} + {\color{blue}\frac{1-\gamma_t(1-\beta)}{\beta}}\nabla f(\Xb_{t}, \bxi_{t}),\label{eq:moonlight_u_2}
%         \\
%     \Mb_t &= \beta \Ub_t + \nabla f(\Xb_{t},\bxi_{t}).\nonumber%\label{eq:mars_moonlight_m}
% \end{align}
% Therefore, even in the case when $\gamma_t=1$,~\eqref{eq:mars_moonlight_u} is not identical to~\eqref{eq:moonlight_u_2}. 

\subsection{Convergence Analysis}
To analyze the convergence of MARS-M, similar to~\citet{yuan2024mars}, we first make the following assumptions:
\begin{assumption}[Bounded Variance]\label{assum:variance}
We assume that the variance of gradient estimator is bounded by $\sigma^2$. i.e., for any noise $\bxi$, parameter $\Xb$, and $\nabla F(\Xb) = \EE [\nabla f(\Xb, \bxi)]$, there exists a positive $\sigma$ such that:
\begin{align}
    \EE\big[\|\nabla f(\Xb, \bxi)-\nabla F(\Xb)\|_F^2\big]\le \sigma^2.
\end{align}
\end{assumption}

\begin{assumption}[$L$-Smoothness]\label{assum:L-smooth}
We assume that for arbitrary $\bxi$, $f(\Xb, \bxi)$ is $L$-smooth:
\begin{align}
    \|\nabla f(\Xb, \bxi)-\nabla f(\Yb, \bxi)\|_F\le L\cdot\|\Xb-\Yb\|_F,~\forall \Xb,\Yb.
\end{align}
\end{assumption}

Both assumptions are standard in the literature \citep{STORM,yuan2024mars}. We have the following theorem, which guarantees the convergence of MARS-M in Algorithm~\ref{alg:MARS_moonlight}.

\begin{theorem}\label{thm:mars-moonlight-convergence} In Algorithm~\ref{alg:MARS_moonlight}, under Assumptions~\ref{assum:variance} and~\ref{assum:L-smooth}, when choosing $\lambda=0$, $\eta_t=(s+t)^{-2/3}, s\geq 2$, suppose $ \beta_{t+1}= 1 - 2\eta_t$, then for $\forall T\ge s$, it holds that
\begin{align*}
    \frac{1}{T}\sum_{t=1}^T\EE\|\nabla F(\Xb_t)\|_F\le&\frac{2\sqrt{2LG}}{T^{1/3}} + \frac{2\sqrt{2LB\log(s+T)}}{T^{1/3}}\\
    &\qquad+\frac{2G}{T^{1/3}}+\frac{2B}{T^{1/3}}\log(s+T)\\&\qquad-
    \frac{\sqrt{2}}{4LT^{1/3}}\sum_{t=1}^T\frac{\tilde{M}_{t+1}}{\sqrt{\eta_t}}.
\end{align*}
where $G=F(\Xb_1)-\min_\Xb F(\Xb)+\frac{\sqrt{2}s^{1/3}\sigma^2}{4L}+\frac{3Ln}{2s^{1/3}}$, $B=\Big(\frac{2\sqrt{2} \sigma^2}{L}+\frac{3\sqrt{2}Ln}{2}\Big)$ and $\tilde{M}_{t+1}$ is a non-negative value defined in \eqref{eq:M_value}.
\end{theorem}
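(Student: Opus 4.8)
The plan is to follow the standard potential-function argument for variance-reduced momentum methods (as in STORM and MARS), adapted to the matrix/Muon setting where the update direction is the orthogonalized momentum $\Ob_t = \mathrm{NewtonSchulz}(\Mb_t) \approx \Ub_t\Vb_t^\top$ rather than the raw momentum. First I would set up the descent inequality: since $\lambda = 0$, the update is $\Xb_{t+1} = \Xb_t - 0.2\sqrt{\max(m,n)}\,\eta_t\Ob_t$, so $L$-smoothness of $F$ gives $F(\Xb_{t+1}) \le F(\Xb_t) - 0.2\sqrt{\max(m,n)}\,\eta_t\langle \nabla F(\Xb_t),\Ob_t\rangle + \tfrac{L}{2}(0.2\sqrt{\max(m,n)})^2\eta_t^2\|\Ob_t\|_F^2$. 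The key structural fact for Muon-type analyses is that $\langle \Mb_t,\Ob_t\rangle = \|\Mb_t\|_*$ (nuclear norm) and $\|\Ob_t\|_F^2 = \mathrm{rank}(\Mb_t) \le n$, so $\langle \nabla F(\Xb_t),\Ob_t\rangle = \langle \Mb_t,\Ob_t\rangle - \langle \Mb_t - \nabla F(\Xb_t),\Ob_t\rangle \ge \|\Mb_t\|_* - \sqrt{n}\,\|\Mb_t - \nabla F(\Xb_t)\|_F \ge \|\nabla F(\Xb_t)\|_F - 2\sqrt{n}\,\|\Mb_t - \nabla F(\Xb_t)\|_F$, using $\|\Mb_t\|_* \ge \|\Mb_t\|_F \ge \|\nabla F(\Xb_t)\|_F - \|\Mb_t - \nabla F(\Xb_t)\|_F$. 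This reduces the whole problem to bounding the momentum error $\epsilon_t := \Mb_t - \nabla F(\Xb_t)$ in Frobenius norm.

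Next I would derive the recursion for $\EE\|\epsilon_{t+1}\|_F^2$. Writing $\Mb_{t+1} = \beta_{t+1}\Mb_t + (1-\beta_{t+1})\mathrm{Clip}(\Cb_{t+1},1)$ and decomposing $\nabla F(\Xb_{t+1}) = \beta_{t+1}\nabla F(\Xb_t) + (1-\beta_{t+1})\nabla F(\Xb_{t+1}) + \beta_{t+1}(\nabla F(\Xb_{t+1}) - \nabla F(\Xb_t))$, one obtains $\epsilon_{t+1} = \beta_{t+1}\epsilon_t + \beta_{t+1}(\nabla F(\Xb_t) - \nabla F(\Xb_{t+1})) + (1-\beta_{t+1})(\mathrm{Clip}(\Cb_{t+1},1) - \nabla F(\Xb_{t+1}))$. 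The crucial variance-reduction step is to control the martingale-difference part: conditioning on $\Xb_t,\Xb_{t+1}$, the term $\Cb_{t+1} - \nabla F(\Xb_{t+1})$ has conditional mean that (ignoring clipping and using the exact correction $\nabla f(\Xb_t,\bxi_{t+1}) - \nabla f(\Xb_{t+1},\bxi_{t+1})$... here note Algorithm 2 uses $\nabla f(\Xb_{t-1},\bxi_t)$, so I must be careful with the indexing) telescopes against the $\nabla F(\Xb_t) - \nabla F(\Xb_{t+1})$ drift term, leaving a conditional second moment bounded by $O(\gamma_t^2 (\beta/(1-\beta))^2 L^2\|\Xb_{t+1}-\Xb_t\|_F^2) + O((1-\gamma_t)^2\sigma^2)$ or, more simply in the clipped regime, by a constant since $\|\mathrm{Clip}(\Cb_{t+1},1)\|_F \le 1$. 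Then $\|\Xb_{t+1} - \Xb_t\|_F^2 = (0.2)^2\max(m,n)\eta_t^2\|\Ob_t\|_F^2 \le (0.2)^2\max(m,n)n\,\eta_t^2$, which feeds the $L^2\|\Xb_{t+1}-\Xb_t\|_F^2$ term back as an $O(\eta_t^2)$ quantity — this is where the $3Ln/2$ and $3\sqrt{2}Ln/2$ constants in $G$ and $B$ originate (roughly $L^2 \cdot mn \cdot \eta_t^2$ divided by an $L$). Using $1 - \beta_{t+1} = 2\eta_t$ and Young's inequality on the cross terms, I expect a recursion of the shape $\EE\|\epsilon_{t+1}\|_F^2 \le (1-2\eta_t)\EE\|\epsilon_t\|_F^2 + C_1\eta_t^2\sigma^2 + C_2 L^2\eta_t^2 mn + (\text{clipping slack})$, possibly with an extra $\eta_t$ rather than $\eta_t^2$ on the noise term depending on how the $(1-\gamma_t)$ mean-zero part is handled.

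Then I would unroll this linear recursion with the schedule $\eta_t = (s+t)^{-2/3}$: the solution satisfies $\EE\|\epsilon_{t+1}\|_F^2 = O\big(\eta_t(\sigma^2 + L^2 mn)\big)$ up to a $\log$ factor coming from $\sum \eta_\tau^2/\eta_t \approx \sum (s+\tau)^{-4/3}(s+t)^{2/3}$-type sums, and in particular $\sum_{t=1}^T \EE\|\epsilon_t\|_F \le \sum_t \sqrt{\EE\|\epsilon_t\|_F^2} = O\big(\sqrt{\sigma^2+L^2mn}\sum_t (s+t)^{-1/3}\big) = O\big(\sqrt{\sigma^2+L^2mn}\,T^{2/3}\big)$ with an additional $\sqrt{\log(s+T)}$. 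Substituting this and $\|\Ob_t\|_F^2 \le n$ into the telescoped descent inequality, dividing by the cumulative step weight $\sum_t 0.2\sqrt{\max(m,n)}\,\eta_t \asymp T^{1/3}$, and dividing through, yields $\frac{1}{T}\sum_t \EE\|\nabla F(\Xb_t)\|_F = O(T^{-1/3}\log(s+T))$ with the precise constants $G$, $B$ as stated; the leftover nonpositive term $-\frac{\sqrt 2}{4LT^{1/3}}\sum_t \tilde M_{t+1}/\sqrt{\eta_t}$ records the part of the descent (the $\|\Mb_t\|_* - \|\nabla F(\Xb_t)\|_F \ge 0$ gap, or a clipping-induced slack packaged as $\tilde M_{t+1} \ge 0$ in \eqref{eq:M_value}) that I am choosing to keep on the right-hand side rather than discard.

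The main obstacle I anticipate is the interaction between gradient clipping and the variance-reduction telescoping. In the unclipped approximate version \eqref{eq:approx_c_t} the STORM-style cancellation is transparent, but Algorithm 2 clips $\Cb_t$ before forming $\Mb_t$, so $\mathrm{Clip}(\Cb_{t+1},1)$ is a nonlinear, biased function of the stochastic gradient, and its conditional expectation is no longer $\nabla F(\Xb_{t+1}) + \text{(mean-zero)}$. Handling this cleanly — either by showing the clip is inactive with high probability along the trajectory (hard without bounded-gradient assumptions), or by absorbing the bias into the nonnegative slack term $\tilde M_{t+1}$ and carrying it through the recursion as the theorem statement suggests — is the delicate part; the matrix-geometry inequalities ($\langle\Mb_t,\Ob_t\rangle = \|\Mb_t\|_*$, $\|\Ob_t\|_F^2 \le n$) and the schedule bookkeeping are otherwise routine adaptations of the Muon and MARS analyses.
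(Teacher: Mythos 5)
Your outline follows essentially the same two-ingredient strategy the paper uses (an $L$-smoothness descent step exploiting $\langle\Mb_t,\Ob_t\rangle=\|\Mb_t\|_*$ and $\|\Ob_t\|_F^2\le n$, combined with a STORM-style recursion for the momentum error, telescoped under the $\eta_t=(s+t)^{-2/3}$ schedule), and the rate you predict matches. The places where your route genuinely diverges from the paper's, and where it is slightly off, are worth recording. First, you unfold the inner product as $\langle\nabla F(\Xb_t),\Ob_t\rangle\ge\|\nabla F(\Xb_t)\|_F-2\sqrt{n}\,\|\epsilon_t\|_F$, which produces a \emph{linear} error term with an extra $\sqrt{n}$. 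The paper instead stops at $-\eta_t\|\Mb_t\|_F+\frac{\eta_t}{\rho}\|\epsilon_t\|_F^2+\frac{\eta_t\rho}{4}\|\Ob_t\|_F^2$ (Lemma~\ref{lemma:diff_F_X}, an AM--GM split), builds the Lyapunov function $\Phi_t=\EE[F(\Xb_t)+\tfrac{\rho_t}{16L^2\eta_{t-1}}\|\epsilon_t\|_F^2]$ with $\rho_t=4\sqrt2 L\sqrt{\eta_t}$ so the squared-error terms telescope against the recursion (Lemma~\ref{lem:recursion}), and only converts to $\|\nabla F(\Xb_t)\|_F$ at the very end by $\|\nabla F(\Xb_t)\|_F\le\|\epsilon_t\|_F+\|\Mb_t\|_F$ and Jensen. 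This avoids ever needing a pointwise bound $\EE\|\epsilon_t\|_F^2=O(\eta_t\cdot\text{const})$ of the kind your unrolling requires, and it keeps the dimension dependence at $n$ rather than picking up the additional $\sqrt{n}$ your Cauchy--Schwarz introduces. Second, you misread $\tilde M_{t+1}$: it is not a clipping slack nor the nuclear-norm gap $\|\Mb_t\|_*-\|\nabla F(\Xb_t)\|_F$. It is the nonnegative quantity in \eqref{eq:M_value}, arising inside the recursion lemma from the specific optimal choice of $\gamma_{t+1}$ (balancing the bias and variance of the corrected gradient via $A_{t+1}$ and $\bDelta_t$); it lives entirely inside the $\|\epsilon_{t+1}\|_F^2$ recursion and is subtracted there, then carried through the Lyapunov telescope. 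Third, on clipping: your worry is legitimate in principle, but the paper does not resolve it either -- it invokes Lemma~\ref{lem:recursion} directly from \citet{yuan2024mars}, whose statement and $\tilde M_{t+1}$ definition make no reference to the clip. So this is not a gap in your proposal relative to the paper; it is a shared reliance on the MARS lemma. Finally, the paper explicitly absorbs the $0.2\sqrt{\max(m,n)}$ scale into $\eta_t$ (see the footnote to the proof), whereas you carry it symbolically; that is a cosmetic difference.
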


We defer the proof of the above theorem to Appendix~\ref{sec:proof_of_thm}. It is worth noting that time-varying momentum parameter $\beta_t$ is required for theoretical analysis, while it can be chosen as a constant in practice. Theorem~\ref{thm:mars-moonlight-convergence}  suggests that MARS-M can achieve a non-asymptotic convergence rate of $\mathcal{O}(T^{-1/3}\log(T))$, the same rate as general MARS as proved in~\citet{yuan2024mars}. 
As a comparison,~\citet{li2025note},~\citet{shen2025convergence} and~\citet{pethick2025training} proved that Muon can achieve only $\mathcal{O}(T^{-1/4})$ convergence rate. 

To sum up, MARS-M constitutes a new addition to the MARS family, broadening the framework to encompass more matrix-based optimizers with strong theoretical guarantees.

\begin{table*}[htb!]
\centering
\caption{The evaluation results of medium models pre-trained using the FineWeb-Edu 100B dataset (2-shot with lm-evaluation-harness). The best scores in each column are bolded. Abbreviations: WG = WinoGrande.}
\label{tab:medium-2-shot-fw}
\resizebox{\textwidth}{!}{%
\begin{tabular}{cccccccccc}
\hline
Method & ARC-C & ARC-E & Hellaswag & MMLU & OpenBookQA & PIQA & SciQ & WG & Avg. \\ \hline
AdamW & 33.53 & 66.46 & 45.02 & \textbf{25.32} & \textbf{35.40} & 69.10 & 86.70 & \textbf{55.80} & 52.17\\
Muon (Moonlight) & 33.45 & 66.33 & 45.48 & 24.73 & \textbf{35.40} & 69.21 & 89.00 & 54.06 & 52.21 \\
MARS-M ($\gamma=0.01$) & 34.73 & 66.12 & \textbf{46.23} & 24.56 & 34.00 & \textbf{70.40} & 88.70 & 53.83 & 52.32 \\
MARS-M ($\gamma=0.025$) & \textbf{35.24} & \textbf{67.30} & 45.82 & 24.98 & 33.40 & 68.28 & \textbf{89.10} & 54.93 & \textbf{52.38} \\ \hline
\end{tabular}%
}
\end{table*}

\begin{table*}[htb!]
\centering
\caption{The evaluation results of large models pre-trained using the FineWeb-Edu 100B dataset (2-shot with lm-evaluation-harness). The best scores in each column are bolded. Abbreviations: WG = WinoGrande.}
\label{tab:large-2-shot-fw}
\resizebox{\textwidth}{!}{%
\begin{tabular}{cccccccccc}
\hline
Method & ARC-C & ARC-E & Hellaswag & MMLU & OpenBookQA & PIQA & SciQ & WG & Avg. \\ \hline
AdamW & 38.05 & 70.29 & 50.30 & \textbf{26.87} & 38.20 & 70.46 & 92.10 & 55.80 & 55.26\\
Muon (Moonlight) & 37.03 & 70.08 & 51.57 & 25.32 & 38.20 & 72.03 & 91.00 & 56.43 & 55.21 \\
MARS-M ($\gamma=0.01$) & \textbf{39.08} & 70.16 & 52.07 & 24.90 & 37.60 & \textbf{72.25} & \textbf{93.30} & \textbf{56.91} & \textbf{55.78} \\
MARS-M ($\gamma=0.025$) & 37.03 & \textbf{72.01} & \textbf{52.22} & 25.10 & \textbf{39.40} & 71.22 & 90.90 & 55.64 & 55.41 \\ \hline
\end{tabular}%
}
\end{table*}

\begin{table*}[htb!]
\centering
\caption{The evaluation results of XL models pre-trained using the FineWeb-Edu 100B dataset (2-shot with lm-evaluation-harness). The best scores in each column are bolded. Abbreviations: WG = WinoGrande.}
\label{tab:xl-2-shot-fw}
\resizebox{\textwidth}{!}{%
\begin{tabular}{cccccccccc}
\hline
Method & ARC-C & ARC-E & Hellaswag & MMLU & OpenBookQA & PIQA & SciQ & WG & Avg. \\ \hline
Muon (Moonlight) & 40.87 & 71.21 & 55.80 & 24.89 & \textbf{42.40} & 74.05 & 91.90 & 57.54 & 57.33 \\
MARS-M ($\gamma=0.01$) & 41.64 & \textbf{73.57} & 57.01 & 24.75 & 39.60 & 73.99 & \textbf{92.30} & \textbf{58.96} & \textbf{57.73} \\
MARS-M ($\gamma=0.025$) & \textbf{42.66} & 72.10 & \textbf{57.09} & \textbf{25.57} & 39.60 & \textbf{74.21} & \textbf{92.30} & 58.09 & 57.70 \\ \hline
\end{tabular}%
}
\end{table*}

\section{Experiments}
\label{sec:exp}
\subsection{LLM Experiments}
We evaluate the performances of our algorithm with baseline algorithms\footnote{For training efficiency, we use approximated MARS-M for the LLM experiments, since the performances of exact version and approximate versions performs similarly as discussed in~\citet{yuan2024mars}. And we compare their differences in computer vision experiments.}, including Moonlight and AdamW, on language modeling tasks based on the nanoGPT~\citep{Karpathy2022} architecture and GPT-2 \citep{radford2019language} model. We conduct experiments on OpenWebText~\citep{Gokaslan2019OpenWeb} and FineWeb-Edu 100B~\citep{lozhkov2024fineweb-edu} datasets.
For OpenWebText, the training and validation datasets contain approximately $9$ billion and $4.4$ million tokens, respectively; while the training and validation sets of FineWeb-Edu 100B are with 100 billion and 0.1 billion tokens. We train for 100,000 steps with 2,000 warm-up steps, using a context length of 1024 and a total batch size of 480. 

We run experiments at four scales: small (125M parameters), medium (355M parameters), large (770M parameters) as well as XL (1.5B parameters). Additionally, we disable biases and set the dropout rate \citep{srivastava2014dropout} to $0.0$. And we also use Cosine learning rate scheduler and set the gradient clipping threshold to 1.0. For training parameters, for experiments with either OpenWebText or FineWeb-Edu 100B datasets, we perform a grid search over learning rates between $\{5e-4, 1e-3, 3e-3, 5e-3, 6e-3, 1e-2\}$ for Moonlight optimizer (the detailed learning rates are listed in Table~\ref{tb:hyperparameter-train} in Appendix~\ref{appendix_hyper}). We just apply the same learning rate for MARS-M optimizer. And we use $\beta=0.95$ for both Moonlight and MARS-M optimizers. Since Moonlight and MARS-M optimizers are designed only for matrix parameters, we optimize the vector-like parameters and embeddings with AdamW with the same learning rate.

For experiments with small models, we use 16 NVIDIA H800 GPUs; and for large models, we implement with 32 NVIDIA H800. Other training hyper-parameters are listed in Appendix~\ref{appendix_hyper}. 

\subsection{Experiment Results}
We show the zoomed-in curves of training and validation losses for different sizes of models on OpenWebText and FineWeb-Edu 100B datasets in Figures~\ref{fig:train_open}-\ref{fig:val_open} and Figures~\ref{fig:train_fw}-\ref{fig:val_fw}, respectively. And the entire curves can be found in Appendix~\ref{sec:additional_exp}. We also plot the curves for the experiments trained with AdamW in~\citet{yuan2024mars}, and the corresponding learning rates is listed in Table~\ref{tb:hyperparameter-train}, which are also attained from grid research as described in~\citet{yuan2024mars}. 

It can be observed that the experiments trained with MARS-M display steady improvement on both training and validation losses over Moonlight. Additionally, although the loss for AdamW is lower in the middle of training due to a smaller maximum learning rate, model trained with MARS-M achieves lower losses than that of AdamW in the final phase. Since the learning rates for all the experiments trained with baseline optimizers results from grid search, it can be concluded that MARS-M can indeed improve the performance of large language model training.

\begin{figure*}[htb!]
    \centering
    \includegraphics[width=0.32\linewidth]{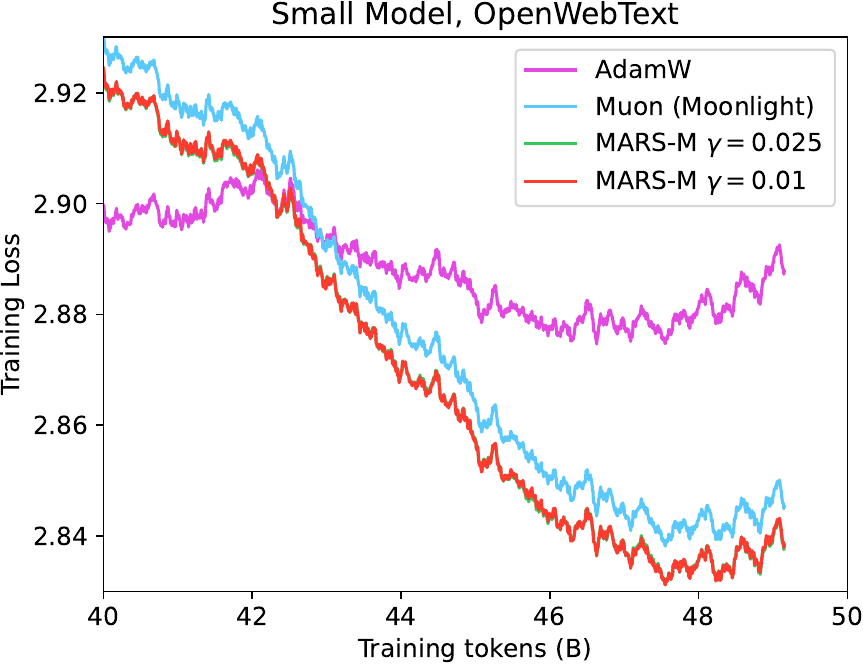}
    \includegraphics[width=0.32\linewidth]{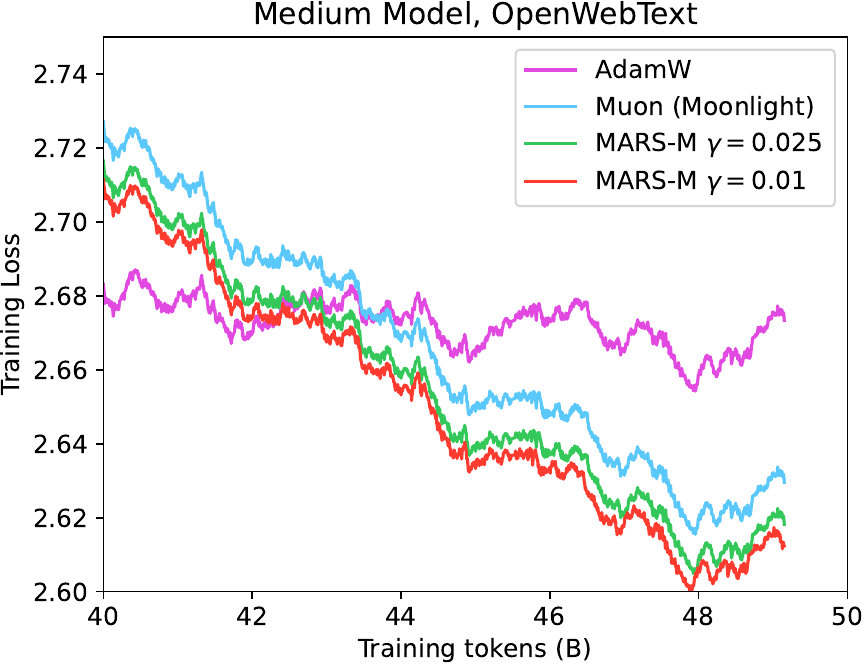}
    \includegraphics[width=0.32\linewidth]{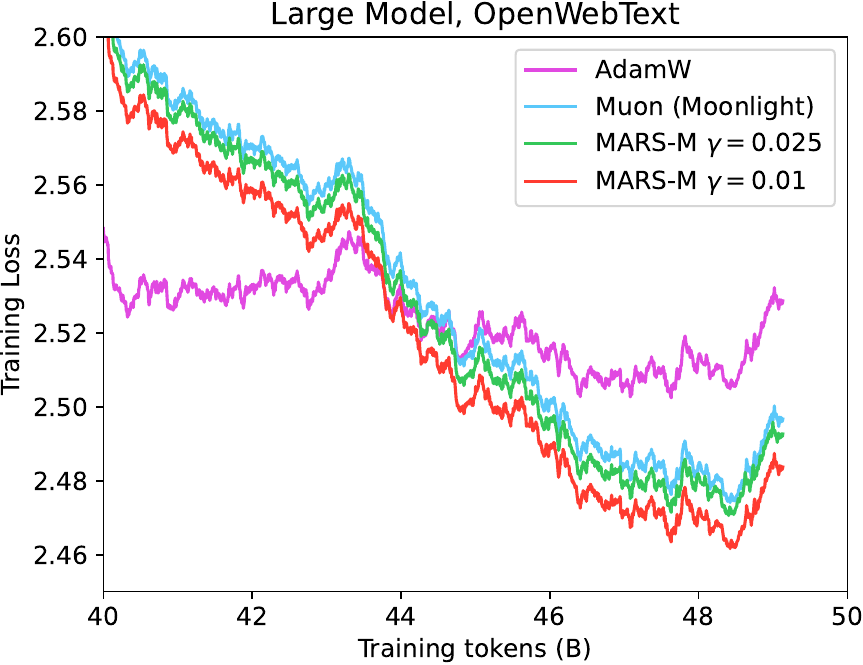}
    \caption{The zoomed-in training loss of small-size (125M), medium-size (355M) and large-size (770M) models trained with different optimizers on the OpenWebText dataset.}
    \label{fig:train_open}
\end{figure*}
\begin{figure*}[htb!]
    \centering
    \includegraphics[width=0.32\linewidth]{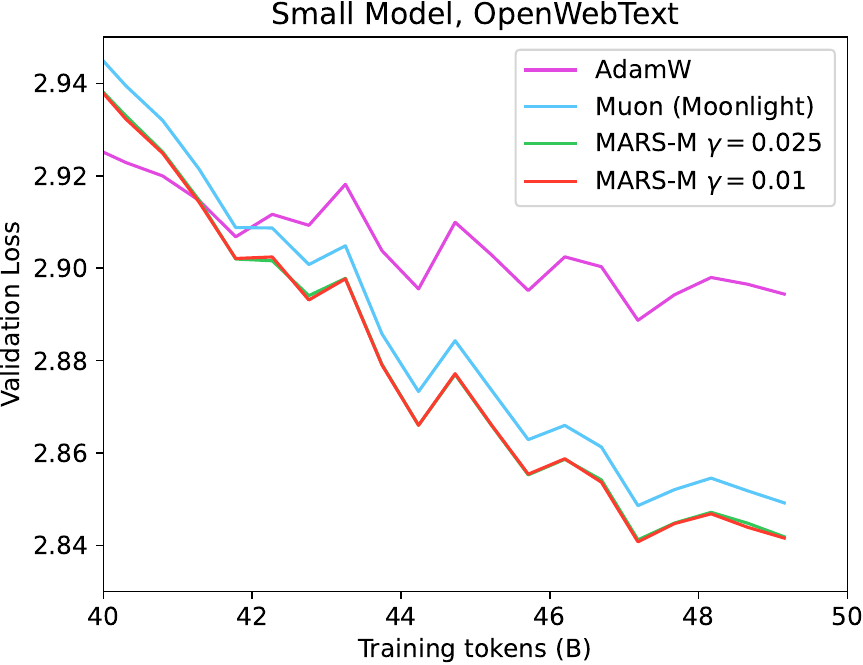}
    \includegraphics[width=0.32\linewidth]{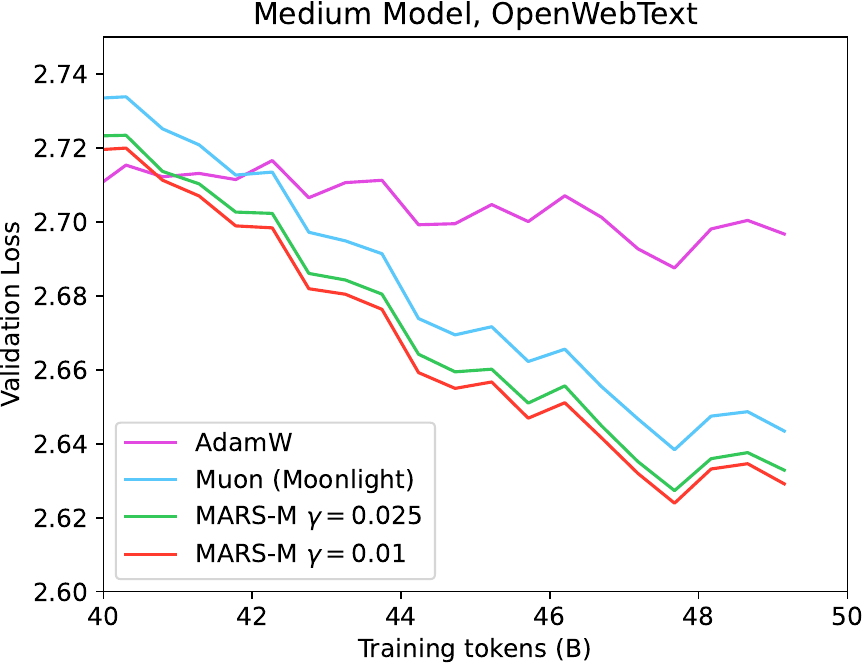}
    \includegraphics[width=0.32\linewidth]{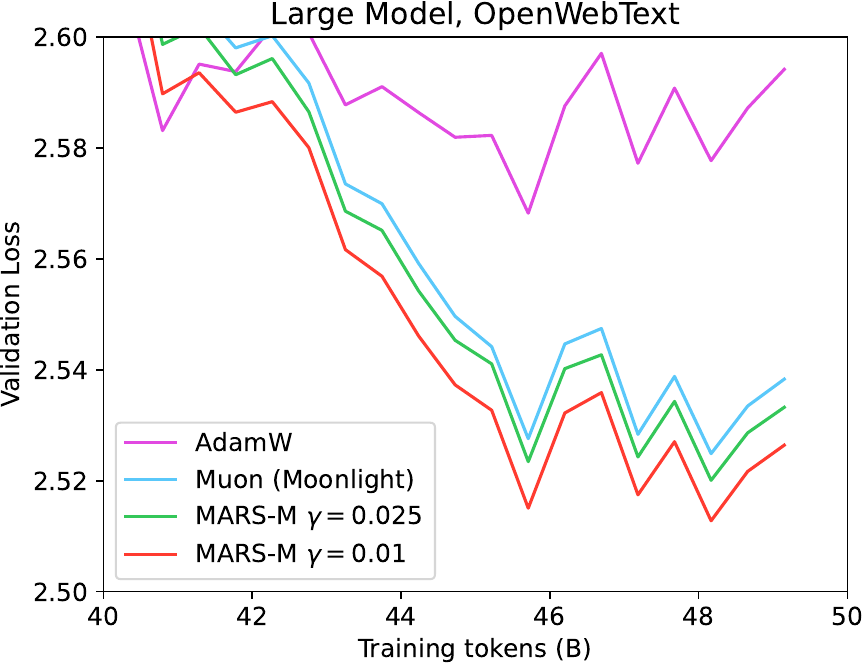}
    \caption{The zoomed-in training and validation loss of large-size models (770M) trained with different optimizers on the OpenWebText dataset.}
    \label{fig:val_open}
\end{figure*}
\begin{figure*}[htb!]
    \centering
    \includegraphics[width=0.32\linewidth]{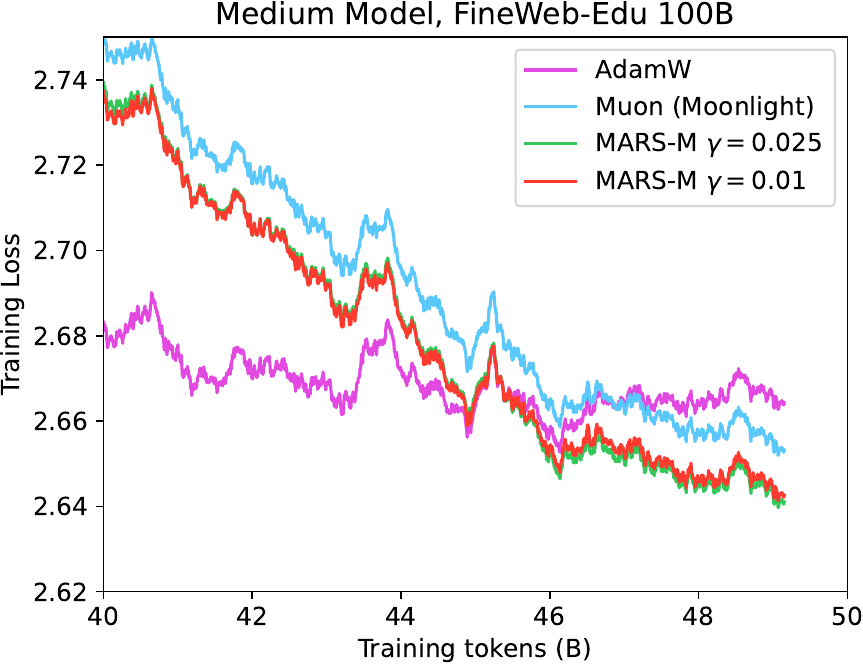}
    \includegraphics[width=0.32\linewidth]{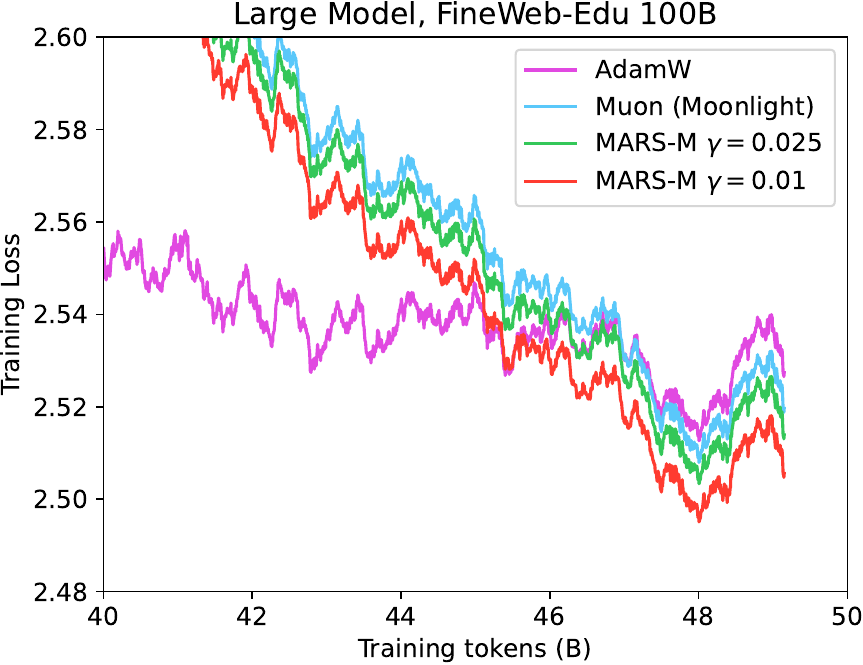}
    \includegraphics[width=0.32\linewidth]{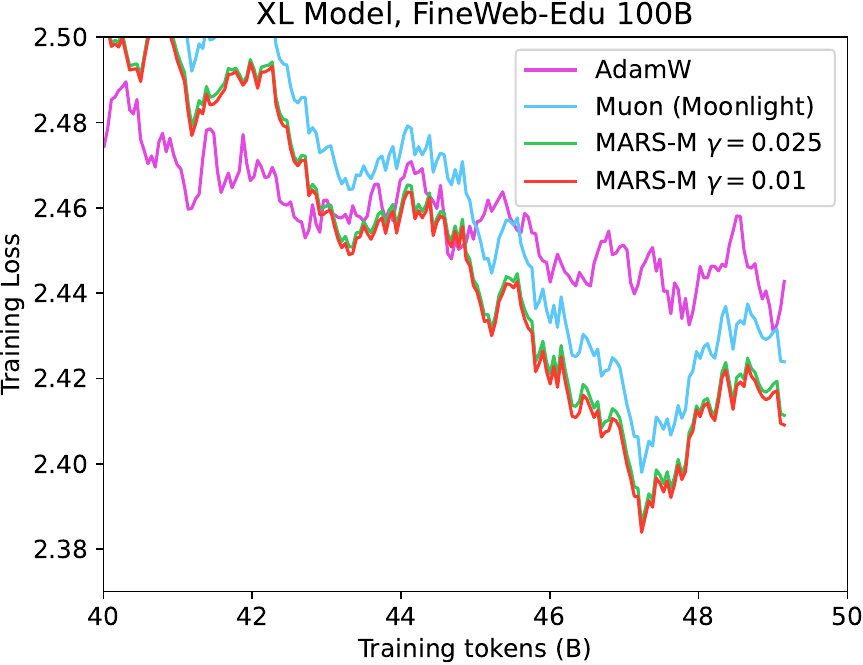}
    \caption{The zoomed-in training loss of medium-size (355M), large-size (770M) and XL-size (1.5B) models trained with different optimizers on the FineWeb-Edu 100B dataset.}
    \label{fig:train_fw}
\end{figure*}
\begin{figure*}[htb!]
    \centering
    \includegraphics[width=0.32\linewidth]{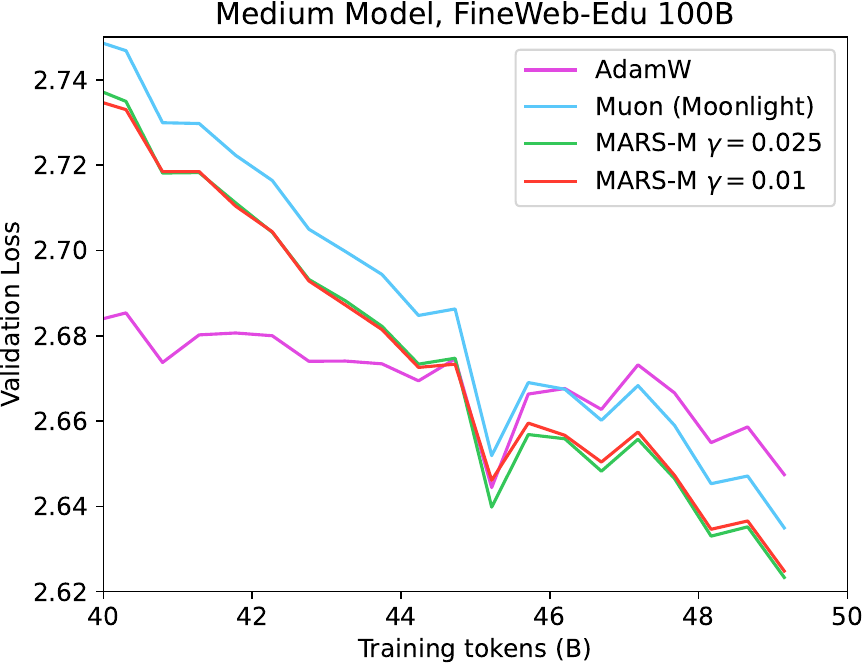}
    \includegraphics[width=0.32\linewidth]{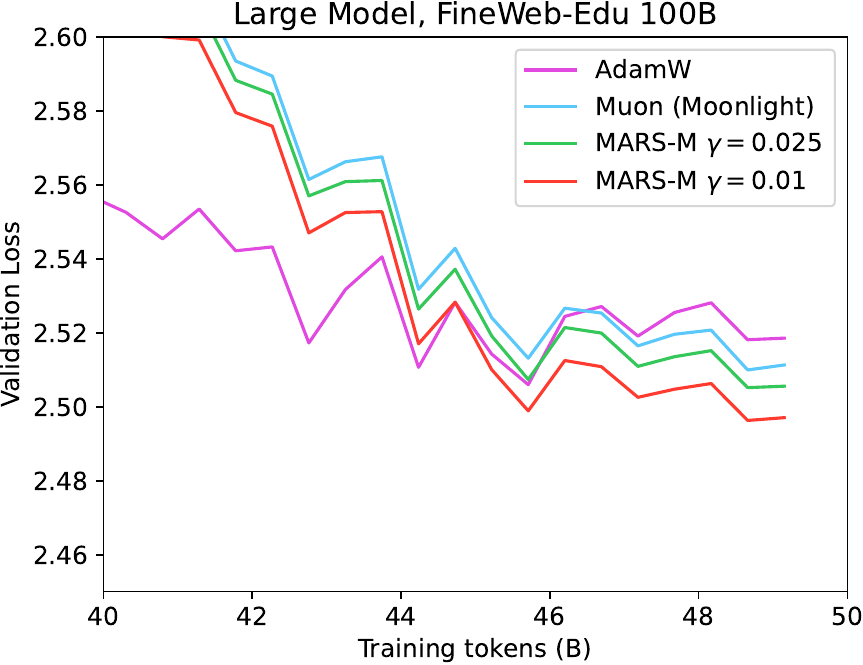}
    \includegraphics[width=0.32\linewidth]{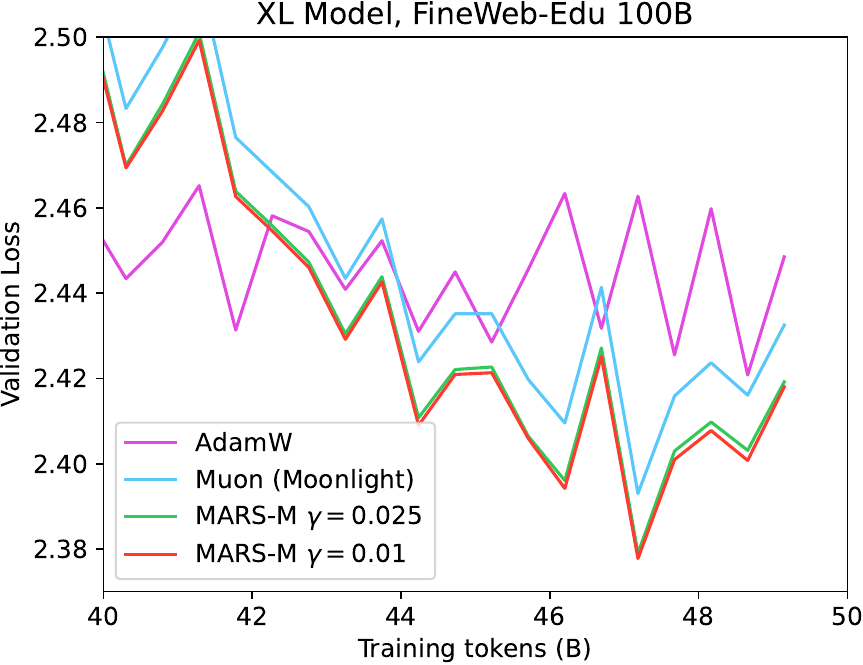}
    \caption{The zoomed-in validation loss of medium-size (355M), large-size (770M) and XL-size (1.5B) models trained with different optimizers on the FineWeb-Edu 100B dataset.}
    \label{fig:val_fw}
\end{figure*}

Furthermore, We also evaluate 0-shot and 2-shot performances of these optimizers on benchmarks including ARC~\citep{ARC}, HellaSwag~\citep{hellaswag}, OBQA~\citep{OpenBookQA}, PIQA~\citep{PIQA}, SciQ~\citep{SciQ}, WinoGrande~\citep{WinoGrande} and MMLU~\citep{MMLU}, based on \texttt{lm-evaluation-harness} codebase~\citep{eval-harness}. We only display the 2-shot performances for medium, large and XL models trained on FineWeb-Edu 100B dataset in Tables~\ref{tab:medium-2-shot-fw},~\ref{tab:large-2-shot-fw} and~\ref{tab:xl-2-shot-fw}, respectively. And other results are postponed to the Appendix~\ref{sec:additional_exp}. It can be observed that MARS-M outperforms Moonlight on most of the benchmarks, showing that our algorithm can enhance the performance of pre-trained large language models on a wide range of downstream tasks.
\begin{figure*}[h]
    \centering
    \includegraphics[width=0.47\linewidth]{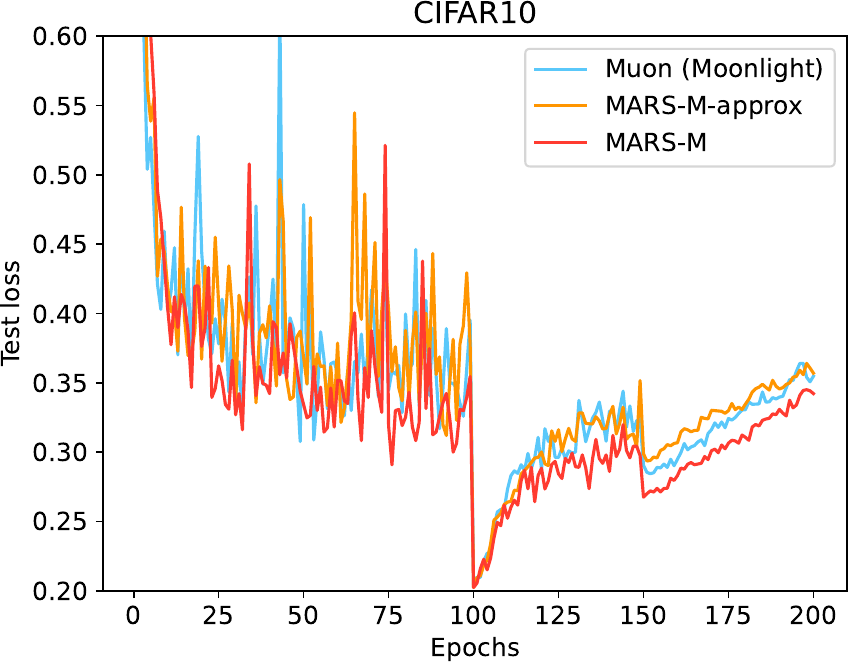}
    \includegraphics[width=0.458\linewidth]{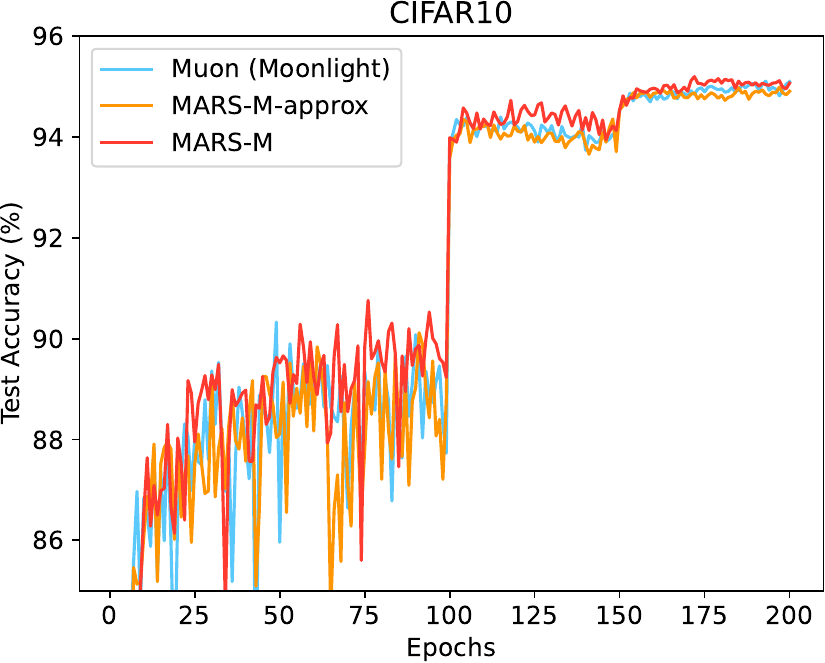}
    \caption{The test loss and test accuracy for different optimizers on CIFAR-10 dataset.}
    \label{fig:cifar10}
\end{figure*}

\begin{figure*}[h]
    \centering
    \includegraphics[width=0.47\linewidth]{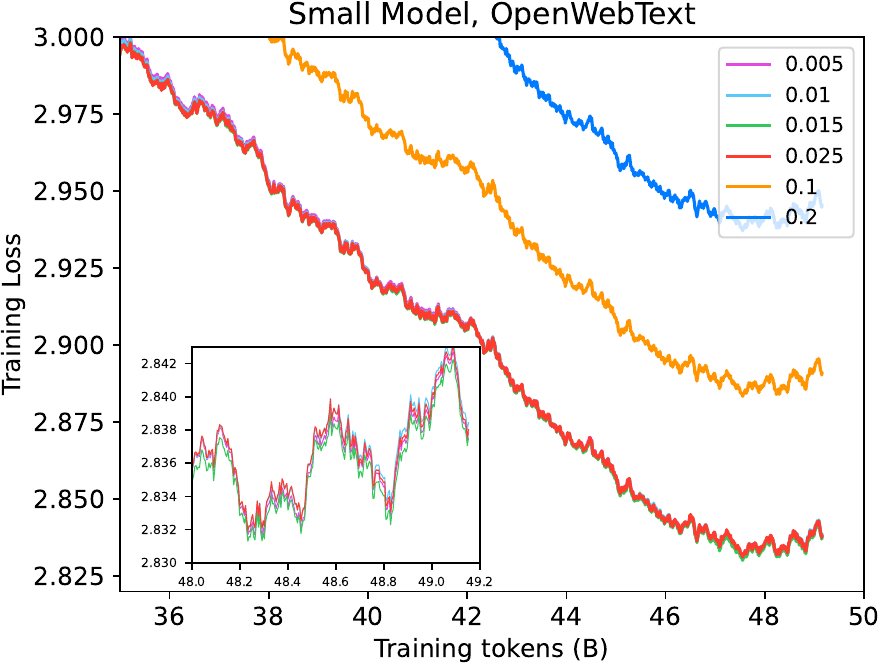}
    \includegraphics[width=0.464\linewidth]{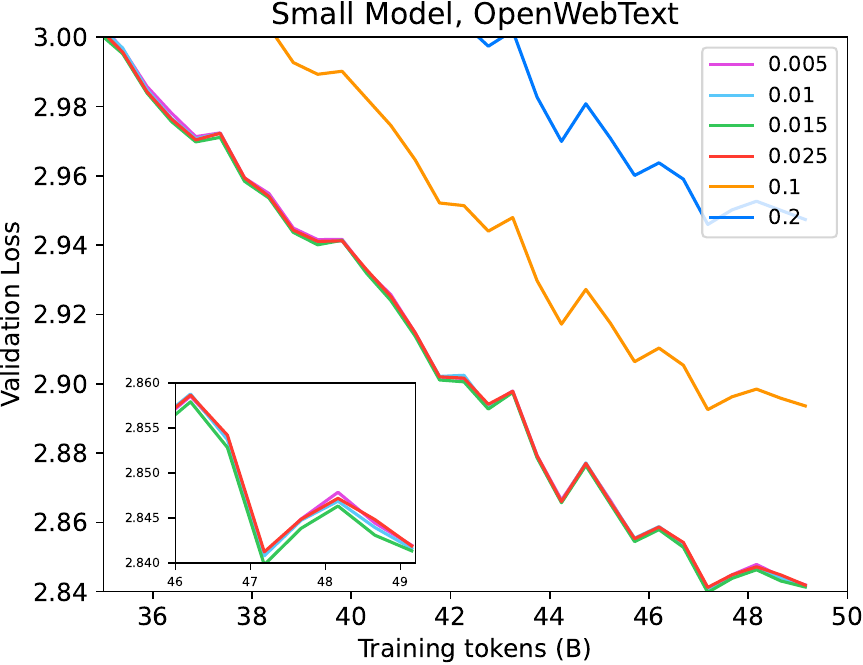}
    \caption{The zoomed-in training and validation loss of small-size models (125M) trained with MARS-M with different $\gamma$s on the OpenWebText 100B dataset.}
    \label{fig:gamma}
\end{figure*}
\subsection{Computer Vision Experiments}
Besides language model task, we also conduct experiments on computer vision tasks with these optimizers on CIFAR-10 dataset~\citep{krizhevsky2009learning} and ResNet-18 model~\citep{he2016deep}. We set $\gamma=0.025$ for MARS-M and do grid search over $\{10^{-4},\cdots,10^{-2}\}$ for the learning rate. We set $\beta=0.95$ for all the optimizers. All the experiments are done with a training batch size of 128 on 1 NVIDIA A6000 GPU, with a total of 200 training epochs. Additionally, we use MultiStepLR scheduler so that the learning rate will decrease to 10\% at 100th epoch and 1\% at 150th epoch. 

The test loss and accuracy for MARS-M and MARS-M approximate, as well as Moonlight optimizer, are shown in Figure~\ref{fig:cifar10}. It can be seen that MARS-M achieve better test accuracy and lower test loss over Moonlight and MARS-M-approx, validating the effect of variance reduction.

\subsection{Ablation Study}
We also implement an ablation study on $\gamma$ in MARS-M to investigate how the scale of gradient-correction term affects performance. The experiments are performed on small models trained on OpenWebText datasets, and the results are shown in Figure~\ref{fig:gamma}. It can be observed that the scale on gradient correction should be much smaller than 1, as usually taken in most of the variance reduction approaches~\citep{STORM,SARAH1,fang2018spider}. However, the training and validation losses for experiments with $\gamma\in[0.005, 0.025]$ are very close, suggesting that a small enough $\gamma$ is relatively robust to variation.

\section{Conclusion}
In this work, we have introduced MARS-M, a new algorithm that integrates MARS-based variance reduction with Muon/Moonlight to accelerate large language model training. Our algorithm integrates the benefits of variance reduction and matrix-based optimization. Through empirical experiments on language model and computer vision tasks, we demonstrate that models trained with MARS-M gain steady edges on loss and evaluation on downstream tasks over Moonlight baseline. In summary, our algorithm demonstrates an effective pathway for improving state-of-the-art matrix-based optimizers using variance reduction techniques.

%\newpage
\bibliography{reference}

@misc{eval-harness,
  author       = {Gao, Leo and Tow, Jonathan and Abbasi, Baber and Biderman, Stella and Black, Sid and DiPofi, Anthony and Foster, Charles and Golding, Laurence and Hsu, Jeffrey and Le Noac'h, Alain and Li, Haonan and McDonell, Kyle and Muennighoff, Niklas and Ociepa, Chris and Phang, Jason and Reynolds, Laria and Schoelkopf, Hailey and Skowron, Aviya and Sutawika, Lintang and Tang, Eric and Thite, Anish and Wang, Ben and Wang, Kevin and Zou, Andy},
  title        = {A framework for few-shot language model evaluation},
  month        = 07,
  year         = 2024,
  publisher    = {Zenodo},
  version      = {v0.4.3},
  doi          = {10.5281/zenodo.12608602},
  url          = {https://zenodo.org/records/12608602}
}

@article{STORM,
  title={Momentum-based variance reduction in non-convex sgd},
  author={Cutkosky, Ashok and Orabona, Francesco},
  journal={Advances in neural information processing systems},
  volume={32},
  year={2019}
}

@inproceedings{Adamw,
  author       = {Ilya Loshchilov and
                  Frank Hutter},
  title        = {Decoupled Weight Decay Regularization},
  booktitle    = {7th International Conference on Learning Representations, {ICLR} 2019,
                  New Orleans, LA, USA, May 6-9, 2019},
  year         = {2019},
}

@inproceedings{adam,
  author       = {Diederik P. Kingma and
                  Jimmy Ba},
  editor       = {Yoshua Bengio and
                  Yann LeCun},
  title        = {Adam: {A} Method for Stochastic Optimization},
  booktitle    = {3rd International Conference on Learning Representations, {ICLR} 2015,
                  San Diego, CA, USA, May 7-9, 2015, Conference Track Proceedings},
  year         = {2015},
}

@article{adagrad,
  title={Adaptive subgradient methods for online learning and stochastic optimization.},
  author={Duchi, John and Hazan, Elad and Singer, Yoram},
  journal={Journal of machine learning research},
  volume={12},
  number={7},
  year={2011}
}

@article{llama3,
  title={The llama 3 herd of models},
  author={Dubey, Abhimanyu and Jauhri, Abhinav and Pandey, Abhinav and Kadian, Abhishek and Al-Dahle, Ahmad and Letman, Aiesha and Mathur, Akhil and Schelten, Alan and Yang, Amy and Fan, Angela and others},
  journal={arXiv preprint arXiv:2407.21783},
  year={2024}
}

@article{johnson2013accelerating,
  title={Accelerating stochastic gradient descent using predictive variance reduction},
  author={Johnson, Rie and Zhang, Tong},
  journal={Advances in neural information processing systems},
  volume={26},
  year={2013}
}

@article{fang2018spider,
  title={Spider: Near-optimal non-convex optimization via stochastic path-integrated differential estimator},
  author={Fang, Cong and Li, Chris Junchi and Lin, Zhouchen and Zhang, Tong},
  journal={Advances in neural information processing systems},
  volume={31},
  year={2018}
}

@article{zhou2020stochastic,
  title={Stochastic nested variance reduction for nonconvex optimization},
  author={Zhou, Dongruo and Xu, Pan and Gu, Quanquan},
  journal={Journal of machine learning research},
  volume={21},
  number={103},
  pages={1--63},
  year={2020}
}

@inproceedings{SARAH1,
  title={SARAH: A novel method for machine learning problems using stochastic recursive gradient},
  author={Nguyen, Lam M and Liu, Jie and Scheinberg, Katya and Tak{\'a}{\v{c}}, Martin},
  booktitle={International conference on machine learning},
  pages={2613--2621},
  year={2017},
  organization={PMLR}
}

@article{SARAH2,
  title={Stochastic recursive gradient algorithm for nonconvex optimization},
  author={Nguyen, Lam M and Liu, Jie and Scheinberg, Katya and Tak{\'a}{\v{c}}, Martin},
  journal={arXiv preprint arXiv:1705.07261},
  year={2017}
}

@article{levy2021storm+,
  title={Storm+: Fully adaptive sgd with recursive momentum for nonconvex optimization},
  author={Levy, Kfir and Kavis, Ali and Cevher, Volkan},
  journal={Advances in Neural Information Processing Systems},
  volume={34},
  pages={20571--20582},
  year={2021}
}

@article{krizhevsky2009learning,
  title={Learning multiple layers of features from tiny images},
  author={Krizhevsky, Alex and Hinton, Geoffrey and others},
  year={2009},
  publisher={Toronto, ON, Canada}
}

@inproceedings{he2016deep,
  title={Deep residual learning for image recognition},
  author={He, Kaiming and Zhang, Xiangyu and Ren, Shaoqing and Sun, Jian},
  booktitle={Proceedings of the IEEE conference on computer vision and pattern recognition},
  pages={770--778},
  year={2016}
}

@article{radford2019language,
  title={Language models are unsupervised multitask learners},
  author={Radford, Alec and Wu, Jeffrey and Child, Rewon and Luan, David and Amodei, Dario and Sutskever, Ilya and others},
  journal={OpenAI blog},
  volume={1},
  number={8},
  pages={9},
  year={2019}
}

@misc{Gokaslan2019OpenWeb,
    title={OpenWebText Corpus},
    author={Gokaslan, Aaron and Cohen, Vanya and Pavlick, Ellie and Tellex, Stefanie},
    howpublished={\url{http://Skylion007.github.io/OpenWebTextCorpus}},
    year={2019}
}

@article{Lion,
  title={Symbolic discovery of optimization algorithms},
  author={Chen, Xiangning and Liang, Chen and Huang, Da and Real, Esteban and Wang, Kaiyuan and Pham, Hieu and Dong, Xuanyi and Luong, Thang and Hsieh, Cho-Jui and Lu, Yifeng and others},
  journal={Advances in neural information processing systems},
  volume={36},
  year={2023}
}

@article{liu2020adam,
  title={Adam $^+$: A Stochastic Method with Adaptive Variance Reduction},
  author={Liu, Mingrui and Zhang, Wei and Orabona, Francesco and Yang, Tianbao},
  journal={arXiv preprint arXiv:2011.11985},
  year={2020}
}

@misc{Karpathy2022,
  author = {Andrej Karpathy},
  title = {\text{NanoGPT}},
  year = {2022},
  publisher = {GitHub},
  journal = {GitHub repository},
  howpublished = {\url{https://github.com/karpathy/nanoGPT}},
  commit = {325be85d9be8c81b436728a420e85796c57dba7e}
}

@article{srivastava2014dropout,
  title={Dropout: a simple way to prevent neural networks from overfitting},
  author={Srivastava, Nitish and Hinton, Geoffrey and Krizhevsky, Alex and Sutskever, Ilya and Salakhutdinov, Ruslan},
  journal={The journal of machine learning research},
  volume={15},
  number={1},
  pages={1929--1958},
  year={2014},
  publisher={JMLR. org}
}

@misc{muon,
  author       = {Keller Jordan and Yuchen Jin and Vlado Boza and You Jiacheng and
                  Franz Cecista and Laker Newhouse and Jeremy Bernstein},
  title        = {Muon: An optimizer for hidden layers in neural networks},
  year         = {2024},
  url          = {https://kellerjordan.github.io/posts/muon/}
}

@book{higham2008functions,
  title={Functions of Matrices},
  author={Higham, Nicholas J.},
  year={2008},
  publisher={Society for Industrial and Applied Mathematics}
}

@article{schulz1933iterative,
  title={Iterative Berechnung der reziproken Matrix},
  author={Schulz, G{\"u}nther},
  journal={Z. Angew. Math. Mech.},
  volume={13},
  pages={57--59},
  year={1933}
}

@inproceedings{bernstein2024old,
  title={Old Optimizer, New Norm: An Anthology},
  author={Bernstein, Jeremy and Newhouse, Laker},
  booktitle={OPT 2024: Optimization for Machine Learning},
  year={2024}
}

@inproceedings{gupta2018shampoo,
  title={Shampoo: Preconditioned stochastic tensor optimization},
  author={Gupta, Vineet and Koren, Tomer and Singer, Yoram},
  booktitle={International Conference on Machine Learning},
  pages={1842--1850},
  year={2018},
  organization={PMLR}
}

@article{lakic1998computation,
  title={On the computation of the matrix k-th root},
  author={Laki{\'c}, Slobodan},
  journal={ZAMM-Journal of Applied Mathematics and Mechanics/Zeitschrift f{\"u}r Angewandte Mathematik und Mechanik: Applied Mathematics and Mechanics},
  volume={78},
  number={3},
  pages={167--172},
  year={1998},
  publisher={Wiley Online Library}
}

@article{anil2020scalable,
  title={Scalable second order optimization for deep learning},
  author={Anil, Rohan and Gupta, Vineet and Koren, Tomer and Regan, Kevin and Singer, Yoram},
  journal={arXiv preprint arXiv:2002.09018},
  year={2020}
}

@book{nesterov2013introductory,
  title={Introductory lectures on convex optimization: A basic course},
  author={Nesterov, Yurii},
  volume={87},
  year={2013},
  publisher={Springer Science \& Business Media}
}

@misc{lozhkov2024fineweb-edu,
    author       = { Lozhkov, Anton and Ben Allal, Loubna and von Werra, Leandro and Wolf, Thomas },  
    title        = { FineWeb-Edu: the Finest Collection of Educational Content }, 
    year         = 2024,  
    url          = { https://huggingface.co/datasets/HuggingFaceFW/fineweb-edu },  
    doi          = { 10.57967/hf/2497 },
    publisher    = { Hugging Face }
}

@article{defazio2014saga,
  title={SAGA: A fast incremental gradient method with support for non-strongly convex composite objectives},
  author={Defazio, Aaron and Bach, Francis and Lacoste-Julien, Simon},
  journal={Advances in neural information processing systems},
  volume={27},
  year={2014}
}

@inproceedings{yin2023coefficient,
  title={A Coefficient Makes SVRG Effective},
  author={Yin, Yida and Xu, Zhiqiu and Li, Zhiyuan and Darrell, Trevor and Liu, Zhuang},
  booktitle={The Thirteenth International Conference on Learning Representations},
  year={2024}
}

@article{kavis2022adaptive,
  title={Adaptive stochastic variance reduction for non-convex finite-sum minimization},
  author={Kavis, Ali and Skoulakis, Stratis and Antonakopoulos, Kimon and Dadi, Leello Tadesse and Cevher, Volkan},
  journal={Advances in Neural Information Processing Systems},
  volume={35},
  pages={23524--23538},
  year={2022}
}

@article{roux2012stochastic,
  title={A stochastic gradient method with an exponential convergence \_rate for finite training sets},
  author={Roux, Nicolas and Schmidt, Mark and Bach, Francis},
  journal={Advances in neural information processing systems},
  volume={25},
  year={2012}
}

@article{huang2021super,
  title={Super-adam: faster and universal framework of adaptive gradients},
  author={Huang, Feihu and Li, Junyi and Huang, Heng},
  journal={Advances in Neural Information Processing Systems},
  volume={34},
  pages={9074--9085},
  year={2021}
}

@article{hazan2015beyond,
  title={Beyond convexity: Stochastic quasi-convex optimization},
  author={Hazan, Elad and Levy, Kfir and Shalev-Shwartz, Shai},
  journal={Advances in neural information processing systems},
  volume={28},
  year={2015}
}

@article{defazio2019ineffectiveness,
  title={On the ineffectiveness of variance reduced optimization for deep learning},
  author={Defazio, Aaron and Bottou, L{\'e}on},
  journal={Advances in Neural Information Processing Systems},
  volume={32},
  year={2019}
}

@inproceedings{shazeer2018adafactor,
  title={Adafactor: Adaptive learning rates with sublinear memory cost},
  author={Shazeer, Noam and Stern, Mitchell},
  booktitle={International Conference on Machine Learning},
  pages={4596--4604},
  year={2018},
  organization={PMLR}
}

@inproceedings{allen2016improved,
  title={Improved SVRG for non-strongly-convex or sum-of-non-convex objectives},
  author={Allen-Zhu, Zeyuan and Yuan, Yang},
  booktitle={International conference on machine learning},
  pages={1080--1089},
  year={2016},
  organization={PMLR}
}

@inproceedings{reddi2016stochastic,
  title={Stochastic variance reduction for nonconvex optimization},
  author={Reddi, Sashank J and Hefny, Ahmed and Sra, Suvrit and Poczos, Barnabas and Smola, Alex},
  booktitle={International conference on machine learning},
  pages={314--323},
  year={2016},
  organization={PMLR}
}

@article{shalev2013stochastic,
  title={Stochastic dual coordinate ascent methods for regularized loss minimization.},
  author={Shalev-Shwartz, Shai and Zhang, Tong},
  journal={Journal of Machine Learning Research},
  volume={14},
  number={1},
  year={2013}
}

@article{wang2019spiderboost,
  title={Spiderboost and momentum: Faster variance reduction algorithms},
  author={Wang, Zhe and Ji, Kaiyi and Zhou, Yi and Liang, Yingbin and Tarokh, Vahid},
  journal={Advances in Neural Information Processing Systems},
  volume={32},
  year={2019}
}

@inproceedings{SOAP,
  year={2024},
  title={SOAP: Improving and Stabilizing Shampoo using Adam for Language Modeling},
  author={Vyas, Nikhil and Morwani, Depen and Zhao, Rosie and Shapira, Itai and Brandfonbrener, David and Janson, Lucas and Kakade, Sham M},
  booktitle={The Thirteenth International Conference on Learning Representations}
}

@inproceedings{hellaswag,
  author       = {Rowan Zellers and
                  Ari Holtzman and
                  Yonatan Bisk and
                  Ali Farhadi and
                  Yejin Choi},
  editor       = {Anna Korhonen and
                  David R. Traum and
                  Llu{\'{\i}}s M{\`{a}}rquez},
  title        = {HellaSwag: Can a Machine Really Finish Your Sentence?},
  booktitle    = {Proceedings of the 57th Conference of the Association for Computational
                  Linguistics, {ACL} 2019, Florence, Italy, July 28- August 2, 2019,
                  Volume 1: Long Papers},
  pages        = {4791--4800},
  publisher    = {Association for Computational Linguistics},
  year         = {2019},
  doi          = {10.18653/V1/P19-1472},
}

@inproceedings{MMLU,
  author       = {Dan Hendrycks and
                  Collin Burns and
                  Steven Basart and
                  Andy Zou and
                  Mantas Mazeika and
                  Dawn Song and
                  Jacob Steinhardt},
  title        = {Measuring Massive Multitask Language Understanding},
  booktitle    = {9th International Conference on Learning Representations, {ICLR} 2021,
                  Virtual Event, Austria, May 3-7, 2021},
  year         = {2021},
}

@inproceedings{WinoGrande, 
  author       = {Keisuke Sakaguchi and
                  Ronan Le Bras and
                  Chandra Bhagavatula and
                  Yejin Choi},
  title        = {WinoGrande: An Adversarial Winograd Schema Challenge at Scale},
  booktitle    = {The Thirty-Fourth {AAAI} Conference on Artificial Intelligence, {AAAI}
                  2020, The Thirty-Second Innovative Applications of Artificial Intelligence
                  Conference, {IAAI} 2020, The Tenth {AAAI} Symposium on Educational
                  Advances in Artificial Intelligence, {EAAI} 2020, New York, NY, USA,
                  February 7-12, 2020},
  pages        = {8732--8740},
  publisher    = {{AAAI} Press},
  year         = {2020},
}

@inproceedings{PIQA,
  author       = {Yonatan Bisk and
                  Rowan Zellers and
                  Ronan Le Bras and
                  Jianfeng Gao and
                  Yejin Choi},
  title        = {{PIQA:} Reasoning about Physical Commonsense in Natural Language},
  booktitle    = {The Thirty-Fourth {AAAI} Conference on Artificial Intelligence, {AAAI}
                  2020, The Thirty-Second Innovative Applications of Artificial Intelligence
                  Conference, {IAAI} 2020, The Tenth {AAAI} Symposium on Educational
                  Advances in Artificial Intelligence, {EAAI} 2020, New York, NY, USA,
                  February 7-12, 2020},
  pages        = {7432--7439},
  publisher    = {{AAAI} Press},
  year         = {2020},
}

@inproceedings{OpenBookQA,
  author       = {Todor Mihaylov and
                  Peter Clark and
                  Tushar Khot and
                  Ashish Sabharwal},
  editor       = {Ellen Riloff and
                  David Chiang and
                  Julia Hockenmaier and
                  Jun'ichi Tsujii},
  title        = {Can a Suit of Armor Conduct Electricity? {A} New Dataset for Open
                  Book Question Answering},
  booktitle    = {Proceedings of the 2018 Conference on Empirical Methods in Natural
                  Language Processing, Brussels, Belgium, October 31 - November 4, 2018},
  pages        = {2381--2391},
  publisher    = {Association for Computational Linguistics},
  year         = {2018},
  doi          = {10.18653/V1/D18-1260},
}

@inproceedings{ARC,
  author       = {Vikas Yadav and
                  Steven Bethard and
                  Mihai Surdeanu},
  editor       = {Kentaro Inui and
                  Jing Jiang and
                  Vincent Ng and
                  Xiaojun Wan},
  title        = {Quick and (not so) Dirty: Unsupervised Selection of Justification
                  Sentences for Multi-hop Question Answering},
  booktitle    = {Proceedings of the 2019 Conference on Empirical Methods in Natural
                  Language Processing and the 9th International Joint Conference on
                  Natural Language Processing, {EMNLP-IJCNLP} 2019, Hong Kong, China},
  pages        = {2578--2589},
  publisher    = {Association for Computational Linguistics},
  year         = {2019},
  doi          = {10.18653/V1/D19-1260},
}

@phdthesis{li2024smoothness,
  title={Smoothness and Adaptivity in Nonlinear Optimization for Machine Learning Applications},
  author={Li, Haochuan},
  year={2024},
  school={Massachusetts Institute of Technology}
}

@article{frangella2024promise,
  title={Promise: Preconditioned stochastic optimization methods by incorporating scalable curvature estimates},
  author={Frangella, Zachary and Rathore, Pratik and Zhao, Shipu and Udell, Madeleine},
  journal={Journal of Machine Learning Research},
  volume={25},
  number={346},
  pages={1--57},
  year={2024}
}

@inproceedings{derezinski2023stochastic,
  year={2023},
  title={Stochastic Variance-Reduced Newton: Accelerating Finite-Sum Minimization with Large Batches},
  author={Derezinski, Michal},
  booktitle={OPT 2023: Optimization for Machine Learning}
}

@inproceedings{yuan2024mars,
  title={MARS: Unleashing the Power of Variance Reduction for Training Large Models},
  author={Yuan, Huizhuo and Liu, Yifeng and Wu, Shuang and Gu, Quanquan and others},
  year={2025},
  booktitle={Forty-second International Conference on Machine Learning}
}

@article{liu2025muon,
  title={Muon is scalable for LLM training},
  author={Liu, Jingyuan and Su, Jianlin and Yao, Xingcheng and Jiang, Zhejun and Lai, Guokun and Du, Yulun and Qin, Yidao and Xu, Weixin and Lu, Enzhe and Yan, Junjie and others},
  journal={arXiv preprint arXiv:2502.16982},
  year={2025}
}

@article{semenov2025benchmarking,
  title={Benchmarking Optimizers for Large Language Model Pretraining},
  author={Semenov, Andrei and Pagliardini, Matteo and Jaggi, Martin},
  journal={arXiv preprint arXiv:2509.01440},
  year={2025}
}

@article{team2025kimi,
  title={Kimi k2: Open agentic intelligence},
  author={Team, Kimi and Bai, Yifan and Bao, Yiping and Chen, Guanduo and Chen, Jiahao and Chen, Ningxin and Chen, Ruijue and Chen, Yanru and Chen, Yuankun and Chen, Yutian and others},
  journal={arXiv preprint arXiv:2507.20534},
  year={2025}
}

@article{zeng2025glm,
  title={Glm-4.5: Agentic, reasoning, and coding (arc) foundation models},
  author={Zeng, Aohan and Lv, Xin and Zheng, Qinkai and Hou, Zhenyu and Chen, Bin and Xie, Chengxing and Wang, Cunxiang and Yin, Da and Zeng, Hao and Zhang, Jiajie and others},
  journal={arXiv preprint arXiv:2508.06471},
  year={2025}
}

@article{wen2025fantastic,
  title={Fantastic Pretraining Optimizers and Where to Find Them},
  author={Wen, Kaiyue and Hall, David and Ma, Tengyu and Liang, Percy},
  journal={arXiv preprint arXiv:2509.02046},
  year={2025}
}

@article{erdogdu2015convergence,
  title={Convergence rates of sub-sampled Newton methods},
  author={Erdogdu, Murat A and Montanari, Andrea},
  journal={Advances in Neural Information Processing Systems},
  volume={28},
  year={2015}
}

@article{gonen2015faster,
  title={Faster sgd using sketched conditioning},
  author={Gonen, Alon and Shalev-Shwartz, Shai},
  journal={arXiv preprint arXiv:1506.02649},
  year={2015}
}

@inproceedings{liusophia,
  title={Sophia: A Scalable Stochastic Second-order Optimizer for Language Model Pre-training},
  author={Liu, Hong and Li, Zhiyuan and Hall, David Leo Wright and Liang, Percy and Ma, Tengyu},
  booktitle={The Twelfth International Conference on Learning Representations},
  year={2024}
}

@inproceedings{yao2021adahessian,
  title={Adahessian: An adaptive second order optimizer for machine learning},
  author={Yao, Zhewei and Gholami, Amir and Shen, Sheng and Mustafa, Mustafa and Keutzer, Kurt and Mahoney, Michael},
  booktitle={proceedings of the AAAI conference on artificial intelligence},
  volume={35},
  number={12},
  pages={10665--10673},
  year={2021}
}

@inproceedings{riabinin2025gluon,
  title={Gluon: Making Muon \& Scion Great Again!(Bridging Theory and Practice of LMO-based Optimizers for LLMs)},
  author={Riabinin, Artem and Shulgin, Egor and Gruntkowska, Kaja and Richt{\'a}rik, Peter},
  booktitle={High-dimensional Learning Dynamics 2025},
  year={2025}
}

@inproceedings{pethick2025training,
  title={Training Deep Learning Models with Norm-Constrained LMOs},
  author={Pethick, Thomas and Xie, Wanyun and Antonakopoulos, Kimon and Zhu, Zhenyu and Silveti-Falls, Antonio and Cevher, Volkan},
  booktitle={Forty-second International Conference on Machine Learning},
  year={2025}
}

@article{guo2025deepseek,
  author       = {Daya Guo and
                  Dejian Yang and
                  Haowei Zhang and
                  Junxiao Song and
                  Peiyi Wang and
                  Qihao Zhu and
                  Runxin Xu and
                  Ruoyu Zhang and
                  Shirong Ma and
                  Xiao Bi and others},
  title        = {DeepSeek-R1 incentivizes reasoning in LLMs through reinforcement learning},
  journal      = {Nat.},
  volume       = {645},
  number       = {8081},
  pages        = {633--638},
  year         = {2025},
  doi          = {10.1038/S41586-025-09422-Z},
}

@misc{chatgpt,
  title = {ChatGPT},
  author = {{OpenAI}},
  howpublished = {\url{https://chat.openai.com/}},
  year = {2023},
}

@article{shen2025convergence,
  title={On the convergence analysis of muon},
  author={Shen, Wei and Huang, Ruichuan and Huang, Minhui and Shen, Cong and Zhang, Jiawei},
  journal={arXiv preprint arXiv:2505.23737},
  year={2025}
}

@article{li2025note,
  title={A Note on the Convergence of Muon},
  author={Li, Jiaxiang and Hong, Mingyi},
  journal={arXiv preprint arXiv:2502.02900},
  year={2025}
}

@article{lau2025polargrad,
  title={PolarGrad: A Class of Matrix-Gradient Optimizers from a Unifying Preconditioning Perspective},
  author={Lau, Tim Tsz-Kit and Long, Qi and Su, Weijie},
  journal={arXiv preprint arXiv:2505.21799},
  year={2025}
}

@article{sfyraki2025lions,
  title={Lions and muons: Optimization via stochastic frank-wolfe},
  author={Sfyraki, Maria-Eleni and Wang, Jun-Kun},
  journal={arXiv preprint arXiv:2506.04192},
  year={2025}
}

@inproceedings{Sato2025ConvergenceBA,
  title={Convergence Bound and Critical Batch Size of Muon Optimizer},
  author={Naoki Sato and Hiroki Naganuma and Hideaki Iiduka},
  year={2025},
  url={https://api.semanticscholar.org/CorpusID:280140878}
}

@article{an2025asgo,
  title={Asgo: Adaptive structured gradient optimization},
  author={An, Kang and Liu, Yuxing and Pan, Rui and Ren, Yi and Ma, Shiqian and Goldfarb, Donald and Zhang, Tong},
  journal={arXiv preprint arXiv:2503.20762},
  year={2025}
}

@article{kovalev2025understanding,
  title={Understanding gradient orthogonalization for deep learning via non-euclidean trust-region optimization},
  author={Kovalev, Dmitry},
  journal={arXiv preprint arXiv:2503.12645},
  year={2025}
}

@article{dozat2016incorporating,
  title={Incorporating nesterov momentum into adam},
  author={Dozat, Timothy},
  year={2016}
}

@inproceedings{SciQ,
  title={Crowdsourcing Multiple Choice Science Questions},
  author={Welbl, Johannes and Liu, Nelson F and Gardner, Matt},
  booktitle={Proceedings of the 3rd Workshop on Noisy User-generated Text},
  pages={94--106},
  year={2017}
}
\bibliographystyle{ims}

%\iffalse

\newpage
\appendix
%\hypersetup{linkcolor=black, citecolor=mydarkblue,urlcolor=black}
%\onecolumn

%\renewcommand{\appendixpagename}%{\centering \LARGE Appendix}
%\appendixpage

%\startcontents[section]
% \printcontents[section]{l}{1}{\setcounter{tocdepth}{2}}

%\hypersetup{linkcolor=mydarkblue, citecolor=mydarkblue,urlcolor=mydarkblue}
% \vspace{20ex}

% \newpage
\section{Hyper-parameter Settings}\label{appendix_hyper} 
We list the architectural hyperparameters for GPT-2 models with 125M (small), 355M (medium), 770M (large) and 1.5B (XL) parameters in Table~\ref{arch_hyperparam}. Moreover, Table~\ref{tb:hyperparameter-train} display the training learning rates for models with different sizes.

\begin{table}[htb!]
\caption{Architecture hyperparameters for GPT-2 series models \citep{radford2019language}.}
\label{arch_hyperparam}
\begin{center}
% \begin{sc}
\begin{tabular}{ccccc}
\toprule
\textbf{Model} & \textbf{\#Param} & \textbf{\#Layer} & \textbf{$n_{\text{head}}$} & \textbf{$d_{\text{emb}}$} \\
\midrule
GPT-2 small    & 125M             & 12               & 12                                  & 768                                 \\
GPT-2 medium   & 355M             & 24               & 16                                  & 1024                                \\
GPT-2 large    & 770M             & 36               & 20                                  & 1280                                \\
GPT-2 XL    & 1.5B             & 48               & 25                                  & 1600                                \\
\bottomrule
\end{tabular}
% \end{sc}

\end{center}

\end{table}

% \begin{table}[htb!]
% \caption{General hyper-parameters for the experiments.}
% \label{tb:hyperparameter}
% \begin{center}
% % \begin{sc}
% \begin{tabular}{cc}
% \toprule
% \textbf{Hyper-parameter} & \textbf{Value} \\
% \midrule
% Steps                   & 100,000          \\
% Batch size in total      & 480     \\
% Context length      & 1024     \\
% Gradient clipping threshold & 1.0 \\
% Dropout & 0.0 \\
% Learning rate schedule       & Cosine          \\
% Warm-up steps & 2000 \\
% Base seed                   & 5000          \\
% \bottomrule
% \end{tabular}
% %
% \end{center}
% \end{table}

\begin{table}[htb!]
\caption{Learning rates for GPT-2 experiments for different datasets. }
\label{tb:hyperparameter-train}
\begin{center}

\begin{tabular}{cccc}
\hline
\textbf{Hyper-parameter}           & \textbf{GPT-2 Size} & OpenWebText      & FineWebEdu 100B        \\ \hline
\multirow{3}{*}{Max learning rate} & small (125M)        & 6e-3       & 1e-2        \\
& medium (355M)       & 5e-3       & 5e-3         \\
& large (770M)        & 5e-3       & 5e-3        \\
& XL (1.5B)        & -       & 3e-3        \\\hline
\multirow{3}{*}{Min learning rate} & small (125M)        & 3e-5       & 3e-5        \\
& medium (355M)       & 6e-5       & 6e-5       \\
& large (770M)        & 1e-5       & 1e-5        \\
& XL (1.5B)        & -       & 1e-5        \\\hline
\end{tabular}
\end{center}
\end{table}

\section{Additional Experiments}
\label{sec:additional_exp}
We just display the curves for training and validation losses for small models on the FineWeb-Edu 100B in Figure~\ref{fig:small_fw}. And we also display the curves for training and validation losses for the entire training process on the OpenWebText and FineWeb-Edu 100B datasets in Figures~\ref{fig:train_open_global}-\ref{fig:val_open_global} and Figures~\ref{fig:train_fw_global}-\ref{fig:val_fw_global}, respectively. And the 0-shot and 2-shot evaluation results for these models on two datasets are shown in Tables~\ref{tab:small-0-shot-fw}-\ref{tab:xl-0-shot-fw} and Tables~\ref{tab:small-0-shot-open}-\ref{tab:large-2-shot-open}, respectively. The results of AdamW are taken from~\citet{yuan2024mars}. According to these figures and tables, it can be observed that MARS-M can achieve better performances than AdamW and Moonlight optimizers in most of these cases.

\begin{figure}[htb!]
    \centering
    \includegraphics[width=0.45\linewidth]{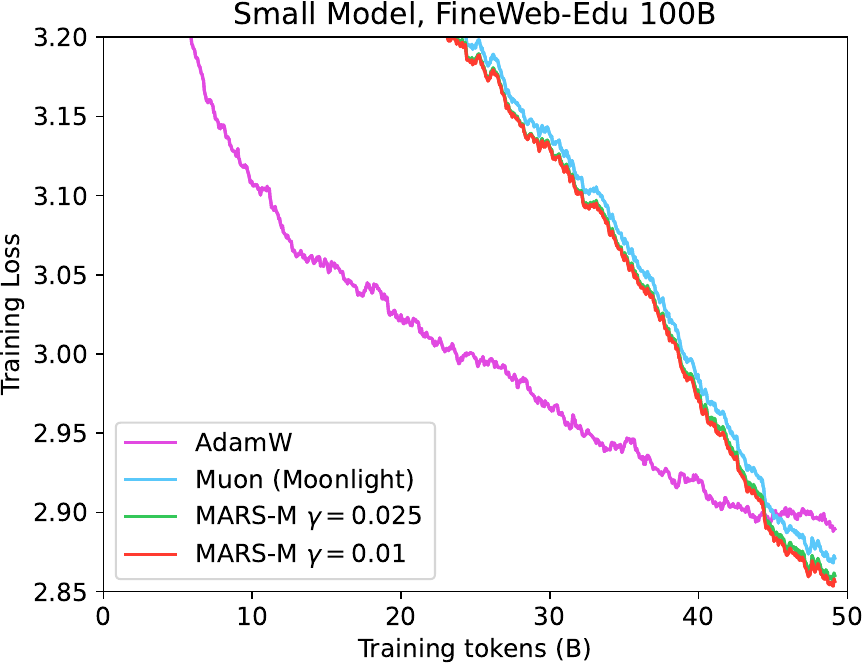}
    \includegraphics[width=0.45\linewidth]{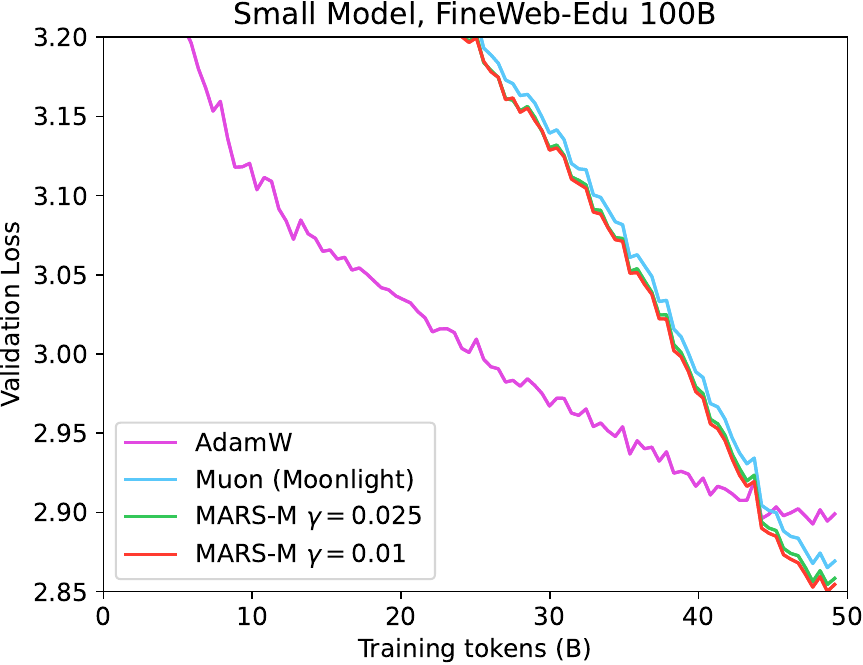}
    \caption{The training and validation loss of small-size models (125M) trained with different optimizers on the FineWeb-Edu 100B dataset.}
    \label{fig:small_fw}
\end{figure}

\begin{figure*}[htb!]
    \centering
    \includegraphics[width=0.32\linewidth]{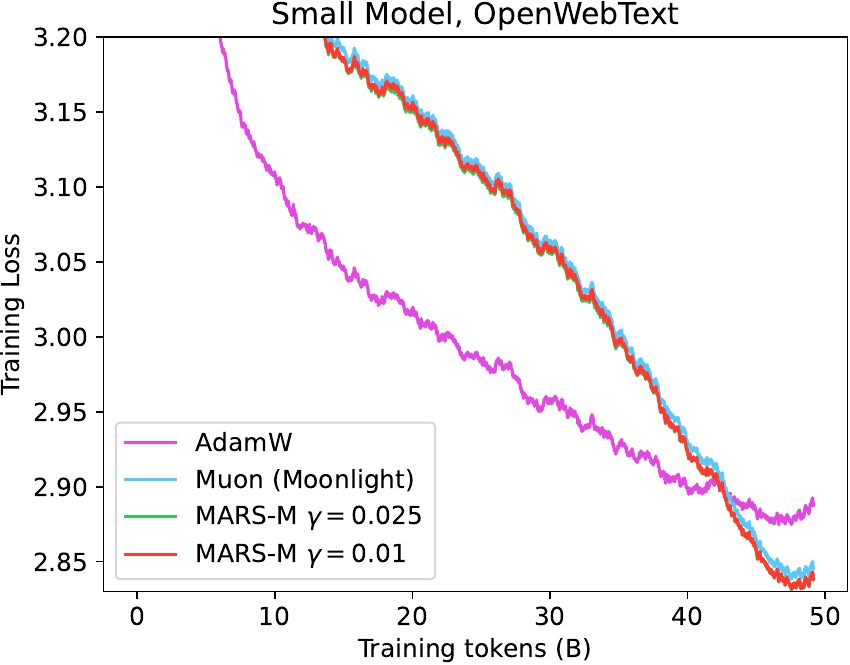}
    \includegraphics[width=0.32\linewidth]{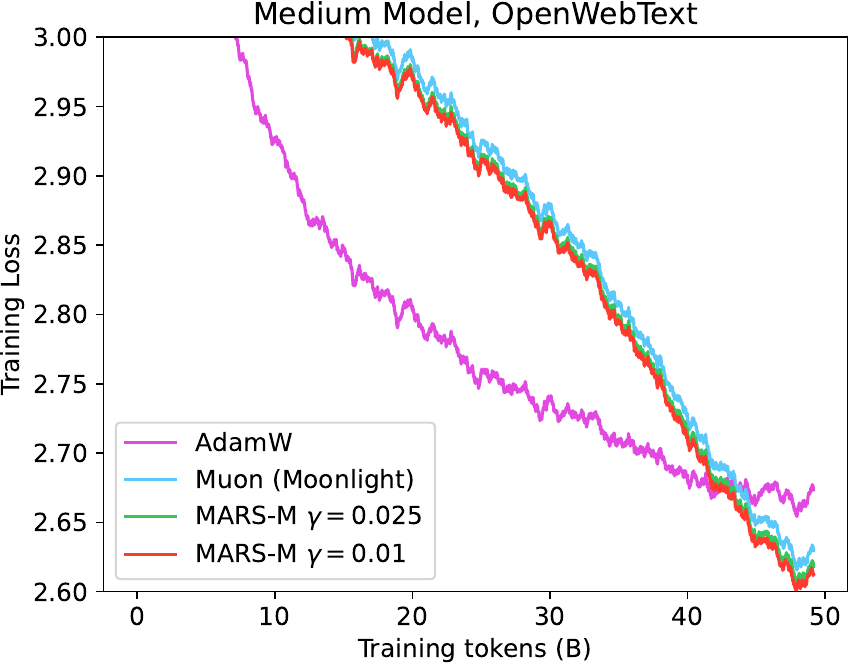}
    \includegraphics[width=0.32\linewidth]{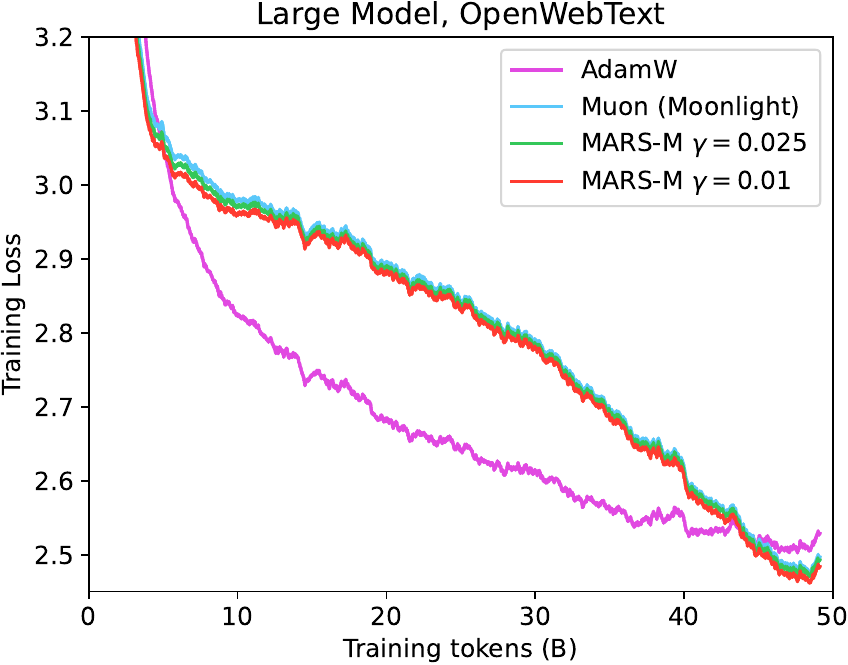}
    \caption{The zoomed-in training loss of small-size (125M), medium-size (355M) and large-size (770M) models trained with different optimizers on the OpenWebText dataset.}
    \label{fig:train_open_global}
\end{figure*}
\begin{figure*}[htb!]
    \centering
    \includegraphics[width=0.32\linewidth]{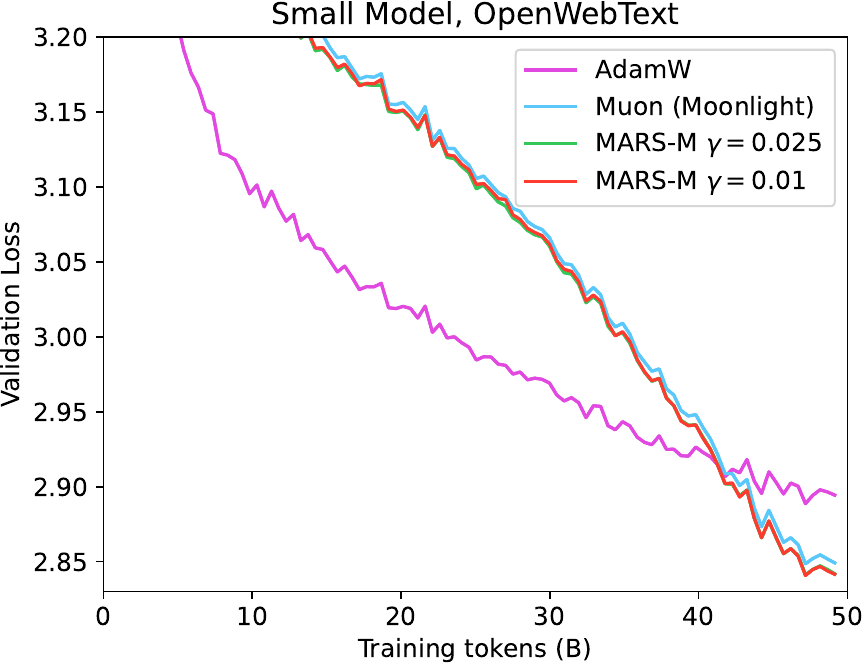}
    \includegraphics[width=0.32\linewidth]{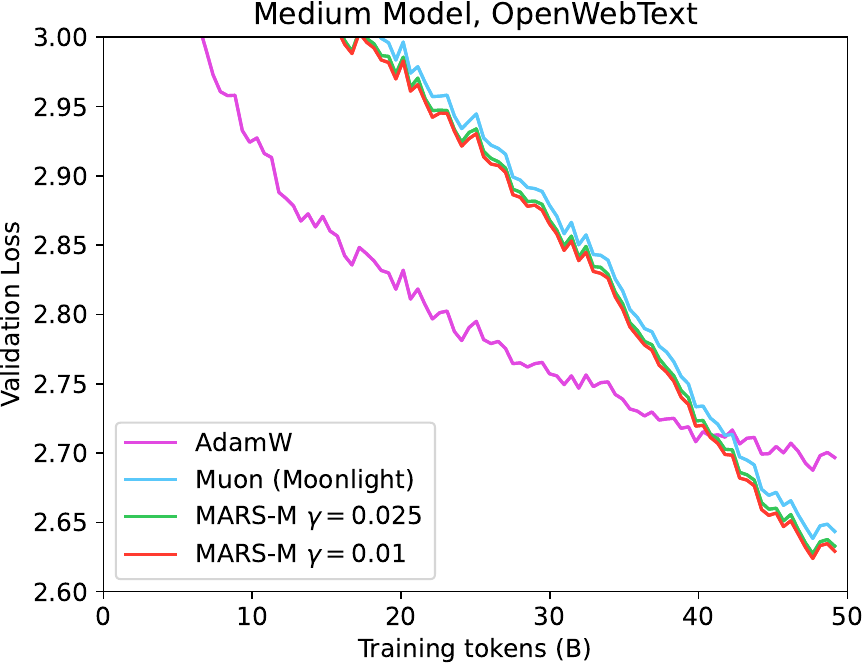}
    \includegraphics[width=0.32\linewidth]{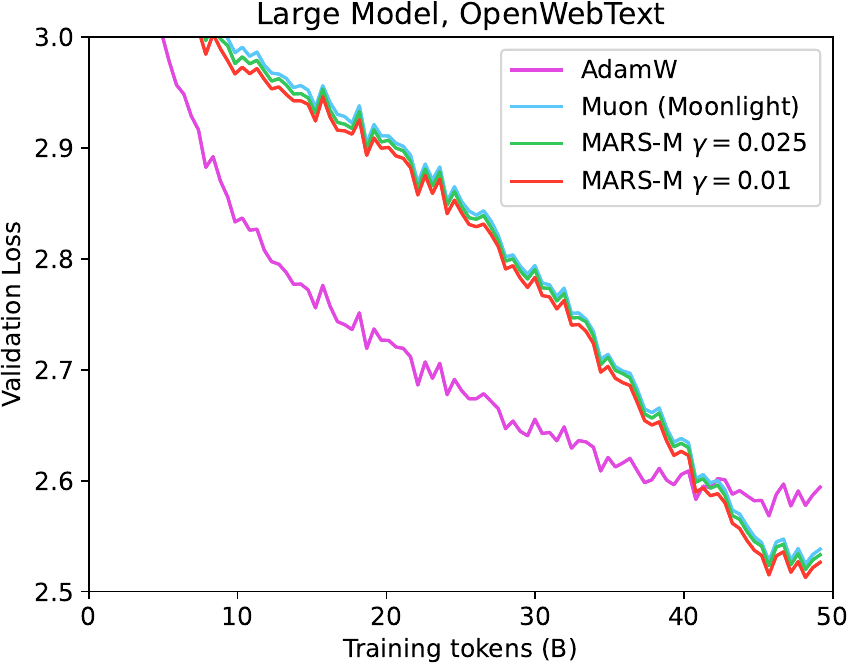}
    \caption{The zoomed-in training and validation loss of large-size models (770M) trained with different optimizers on the OpenWebText dataset.}
    \label{fig:val_open_global}
\end{figure*}
\begin{figure*}[htb!]
    \centering
    \includegraphics[width=0.32\linewidth]{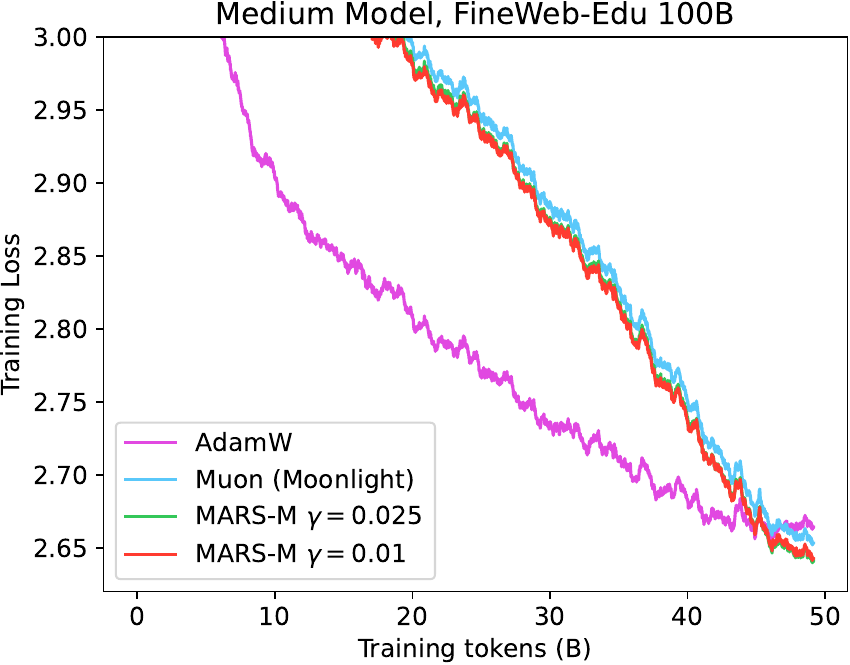}
    \includegraphics[width=0.32\linewidth]{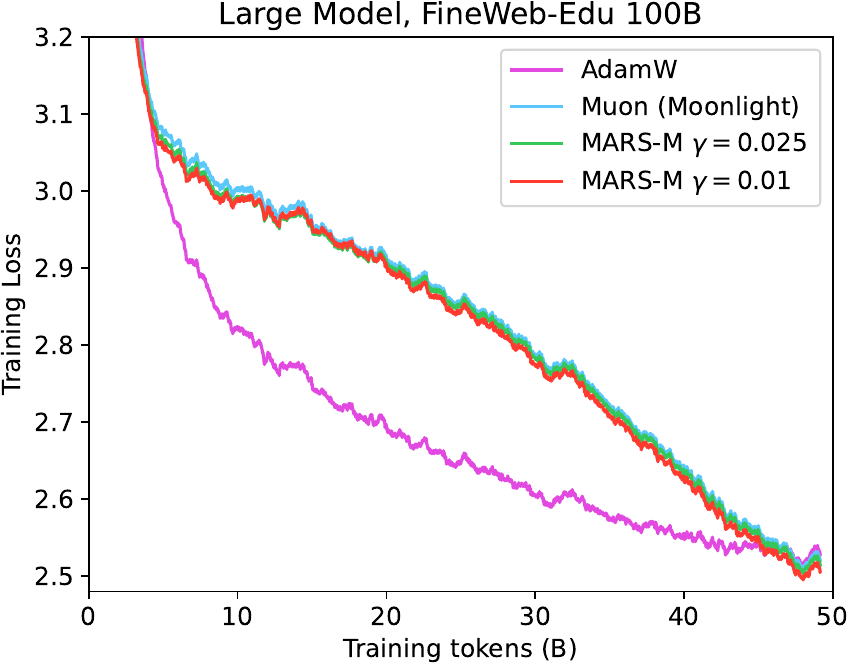}
    \includegraphics[width=0.32\linewidth]{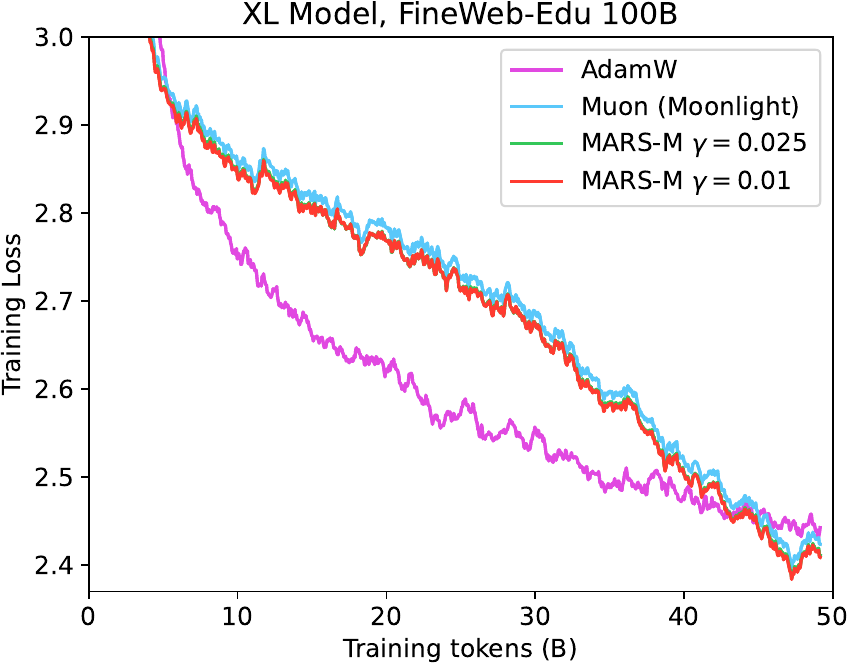}
    \caption{The zoomed-in training loss of medium-size (355M), large-size (770M) and XL-size (1.5B) models trained with different optimizers on the FineWeb-Edu 100B dataset.}
    \label{fig:train_fw_global}
\end{figure*}
\begin{figure*}[htb!]
    \centering
    \includegraphics[width=0.32\linewidth]{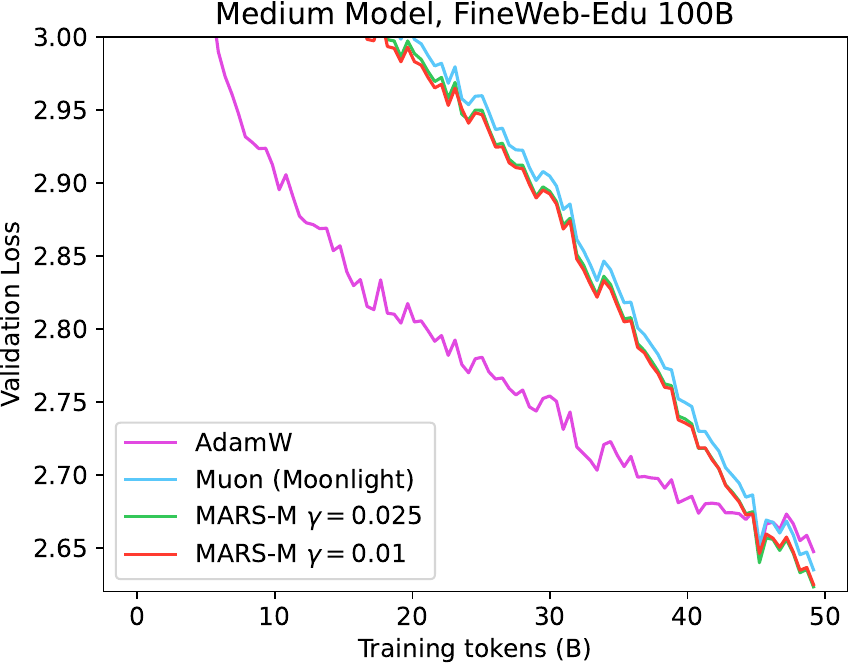}
    \includegraphics[width=0.32\linewidth]{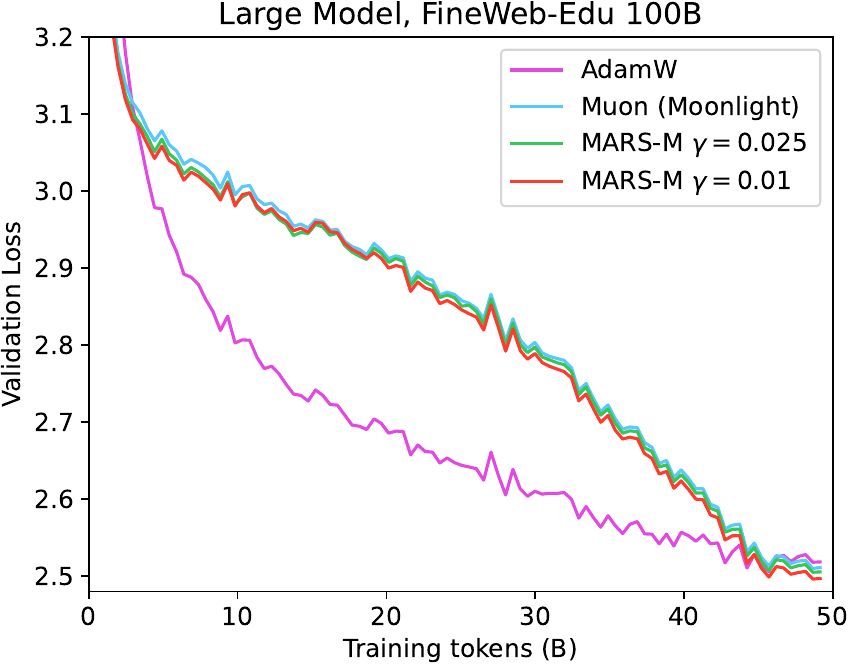}
    \includegraphics[width=0.32\linewidth]{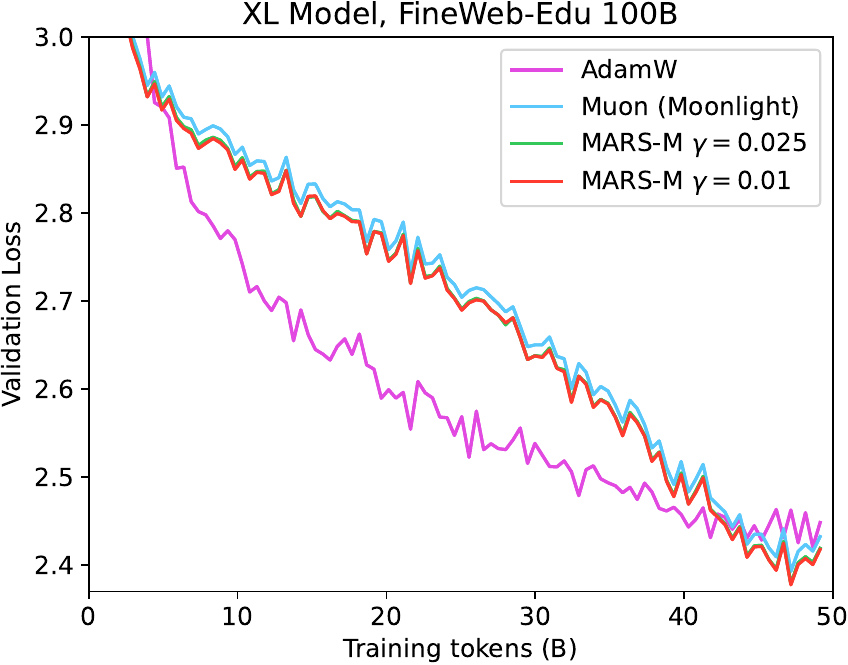}
    \caption{The zoomed-in validation loss of medium-size (355M), large-size (770M) and XL-size (1.5B) models trained with different optimizers on the FineWeb-Edu 100B dataset.}
    \label{fig:val_fw_global}
\end{figure*}

\begin{table}[htb!]
\centering
\caption{The evaluation results of small models pre-trained using the FineWeb-Edu 100B dataset (0-shot with lm-evaluation-harness). The best scores in each column are bolded. Abbreviations: WG = WinoGrande.}
\label{tab:small-0-shot-fw}
\resizebox{\textwidth}{!}{%
\begin{tabular}{cccccccccc}
\hline
Method & ARC-C & ARC-E & Hellaswag & MMLU & OpenBookQA & PIQA & SciQ & WG & Avg. \\ \hline
AdamW & 26.54 & 51.43 & 36.26 & 24.49 & 30.60 & 64.53 & 71.50 & 50.36 & 44.46 \\
Muon (Moonlight) & 26.88 & 52.48 & 37.58 & 23.12 & \textbf{32.80} & 64.85 & 71.20 & 50.67 & 44.95 \\
MARS-M ($\gamma=0.01$) & 26.28 & 51.47 & \textbf{37.77} & 23.26 & 31.80 & \textbf{65.94} & 70.90 & \textbf{52.01} & 44.93 \\
MARS-M ($\gamma=0.025$) & \textbf{27.82} & \textbf{52.57} & 37.74 & \textbf{26.01} & 32.00 & 65.23 & \textbf{71.90} & 51.14 & \textbf{45.55} \\ \hline
\end{tabular}%
}
\end{table}

\begin{table}[htb!]
\centering
\caption{The evaluation results of small models pre-trained using the FineWeb-Edu 100B dataset (2-shot with lm-evaluation-harness). The best scores in each column are bolded. Abbreviations: WG = WinoGrande.}
\label{tab:small-2-shot-fw}
\resizebox{\textwidth}{!}{%
\begin{tabular}{cccccccccc}
\hline
Method & ARC-C & ARC-E & Hellaswag & MMLU & OpenBookQA & PIQA & SciQ & WG & Avg. \\ \hline
Muon (Moonlight) & \textbf{28.41} & 57.62 & 37.00 & 25.83 & 30.00 & 64.20 & 82.90 & 51.38 & 47.17 \\
MARS-M ($\gamma=0.01$) & 27.82 & 58.16 & \textbf{37.50} & 25.66 & \textbf{31.60} & 65.02 & \textbf{86.40} & \textbf{53.51} & \textbf{48.21} \\
MARS-M ($\gamma=0.025$) & \textbf{28.41} & \textbf{58.42} & 37.06 & \textbf{26.46} & 31.20 & \textbf{65.34} & 82.10 & 50.75 & 47.47 \\ \hline
\end{tabular}%
}
\end{table}

\begin{table}[htb!]
\centering
\caption{The evaluation results of medium models pre-trained using the FineWeb-Edu 100B dataset (0-shot with lm-evaluation-harness). The best scores in each column are bolded. Abbreviations: WG = WinoGrande.}
\label{tab:medium-0-shot-fw}
\resizebox{\textwidth}{!}{%
\begin{tabular}{cccccccccc}
\hline
Method & ARC-C & ARC-E & Hellaswag & MMLU & OpenBookQA & PIQA & SciQ & WG & Avg. \\ \hline
AdamW & \textbf{32.68} & 59.34 & 45.05 & 24.98 & \textbf{35.40} & 68.88 & 76.10 & 54.85 & 49.66\\
Muon (Moonlight) & 32.17 & 58.92 & 45.46 & 23.28 & 34.80 & 69.64 & 76.80 & 54.14 & 49.40 \\
MARS-M ($\gamma=0.01$) & 30.55 & 59.60 & \textbf{46.11} & \textbf{25.13} & 31.80 & \textbf{69.91} & 77.50 & 54.06 & 49.33 \\
MARS-M ($\gamma=0.025$) & 32.25 & \textbf{60.65} & 45.92 & 24.52 & 33.80 & 67.90 & \textbf{78.30} & \textbf{56.20} & \textbf{49.94} \\ \hline
\end{tabular}%
}
\end{table}

\begin{table}[htb!]
\centering
\caption{The evaluation results of large models pre-trained using the FineWeb-Edu 100B dataset (0-shot with lm-evaluation-harness). The best scores in each column are bolded. Abbreviations: WG = WinoGrande.}
\label{tab:large-0-shot-fw}
\resizebox{\linewidth}{!}{
\begin{tabular}{cccccccccc}
\hline
Method                & ARC-C          & ARC-E          & Hellaswag      & MMLU           & OpenBookQA     & PIQA           & SciQ           & WG             & Avg.            \\ \hline
AdamW & 34.64 & 63.22 & 50.10 & 25.45 & 38.60 & 70.89 & 80.60 & 57.54 & 52.63\\
Muon (Moonlight)      & 34.30          & 63.34          & 51.93          & 24.18          & \textbf{38.80} & \textbf{72.31} & 81.50          & \textbf{57.77} & 53.07          \\
MARS-M ($\gamma=0.01$)  & \textbf{36.18} & \textbf{64.52} & \textbf{52.29} & \textbf{25.89} & 37.60          & 71.65          & \textbf{82.30} & 56.59          & \textbf{53.45} \\
MARS-M ($\gamma=0.025$) & 31.74          & 62.84          & 51.91          & 24.35          & 35.80          & 70.35          & 79.70          & 54.85          & 51.45          \\ \hline
\end{tabular}
}
\end{table}

\begin{table}[htb!]
\centering
\caption{The evaluation results of XL models pre-trained using the FineWeb-Edu 100B dataset (0-shot with lm-evaluation-harness). The best scores in each column are bolded. Abbreviations: WG = WinoGrande.}
\label{tab:xl-0-shot-fw}
\resizebox{\textwidth}{!}{%
\begin{tabular}{cccccccccc}
\hline
Method & ARC-C & ARC-E & Hellaswag & MMLU & OpenBookQA & PIQA & SciQ & WG & Avg. \\ \hline
AdamW & 38.40 & 68.22 & 53.93 & 25.47 & 39.00 & 72.69 & \textbf{85.30} & 54.78 & 54.72 \\
Muon (Moonlight) & \textbf{39.08} & 67.30 & 56.37 & \textbf{25.32} & \textbf{43.20} & \textbf{73.78} & 83.50 & \textbf{58.41} & \textbf{55.87} \\
MARS-M ($\gamma=0.01$) & 38.74 & 68.14 & 56.83 & 24.68 & 39.80 & 73.23 & 84.70 & 57.22 & 55.42 \\
MARS-M ($\gamma=0.025$) & 38.31 & \textbf{68.56} & \textbf{57.12} & 23.57 & 41.40 & 73.72 & \textbf{85.30} & 56.75 & 55.59 \\ \hline
\end{tabular}%
}
\end{table}

\begin{table}[htb!]
\centering
\caption{The evaluation results of small models pre-trained using the OpenWebText dataset (0-shot with lm-evaluation-harness). The best scores in each column are bolded. Abbreviations: WG = WinoGrande.}
\label{tab:small-0-shot-open}
\resizebox{\textwidth}{!}{%
\begin{tabular}{cccccccccc}
\hline
Method & ARC-C & ARC-E & Hellaswag & MMLU & OpenBookQA & PIQA & SciQ & WG & Avg. \\ \hline
AdamW & 22.27 & 41.37 & 31.73 & \textbf{22.97} & 27.80 & 63.00 & \textbf{67.50} & \textbf{52.01} & 41.08\\
Muon (Moonlight) & \textbf{23.81} & \textbf{41.67} & 33.19 & 22.83 & 28.60 & 63.06 & 66.50 & 51.38 & 41.38 \\
MARS-M ($\gamma=0.01$) & 23.72 & 40.66 & \textbf{33.36} & \textbf{22.97} & 27.80 & \textbf{74.80} & 65.10 & 51.38 & \textbf{42.47} \\
MARS-M ($\gamma=0.025$) & 23.04 & 40.49 & 33.44 & 22.93 & \textbf{30.20} & 63.22 & 66.40 & 50.83 & 41.32 \\ \hline
\end{tabular}%
}
\end{table}

\begin{table}[htb!]
\centering
\caption{The evaluation results of small models pre-trained using the OpenWebText dataset (2-shot with lm-evaluation-harness). The best scores in each column are bolded. Abbreviations: WG = WinoGrande.}
\label{tab:small-2-shot-open}
\resizebox{\textwidth}{!}{%
\begin{tabular}{cccccccccc}
\hline
Method & \multicolumn{1}{c}{ARC-C} & \multicolumn{1}{c}{ARC-E} & \multicolumn{1}{c}{Hellaswag} & \multicolumn{1}{c}{MMLU} & \multicolumn{1}{c}{OpenBookQA} & \multicolumn{1}{c}{PIQA} & \multicolumn{1}{c}{SciQ} & \multicolumn{1}{c}{WG} & \multicolumn{1}{c}{Avg.} \\ \hline
Muon (Moonlight) & 24.23 & 43.18 & 33.20 & 25.00 & 26.80 & \textbf{62.79} & 75.20 & \textbf{53.91} & \textbf{43.04} \\
MARS-M ($\gamma=0.01$) & \textbf{24.49} & \textbf{43.52} & \textbf{33.47} & \textbf{25.35} & 26.20 & 62.02 & 73.50 & 51.46 & 42.50 \\
MARS-M ($\gamma=0.025$) & 23.38 & 43.43 & 33.39 & 23.99 & \textbf{27.40} & 62.40 & \textbf{77.40} & 49.41 & 42.60 \\ \hline
\end{tabular}%
}
\end{table}

\begin{table}[htb!]
\centering
\caption{The evaluation results of medium models pre-trained using the OpenWebText dataset (0-shot with lm-evaluation-harness). The best scores in each column are bolded. Abbreviations: WG = WinoGrande.}
\label{tab:medium-0-shot-open}
\resizebox{\textwidth}{!}{%
\begin{tabular}{cccccccccc}
\hline
Method & ARC-C & ARC-E & Hellaswag & MMLU & OpenBookQA & PIQA & SciQ & WG & Avg. \\ \hline
AdamW & 23.98 & 43.43 & 37.76 & 22.80 & 27.20 & 65.56 & 67.60 & 52.49 & 42.60 \\
Muon (Moonlight) & \textbf{25.60} & 44.99 & 40.00 & 23.20 & 31.20 & 65.94 & \textbf{72.00} & 52.25 & 44.40 \\
MARS-M ($\gamma=0.01$) & 24.49 & 44.49 & \textbf{40.22} & \textbf{23.34} & \textbf{33.40} & \textbf{67.08} & 68.01 & \textbf{54.46} & \textbf{44.44} \\
MARS-M ($\gamma=0.025$) & 24.57 & \textbf{46.42} & 40.21 & 23.28 & 32.00 & 66.27 & 70.00 & 52.64 & 44.42 \\ \hline
\end{tabular}%
}
\end{table}

\begin{table}[htb!]
\centering
\caption{The evaluation results of medium models pre-trained using the OpenWebText dataset (2-shot with lm-evaluation-harness). The best scores in each column are bolded. Abbreviations: WG = WinoGrande.}
\label{tab:medium-2-shot-open}
\resizebox{\textwidth}{!}{%
\begin{tabular}{cccccccccc}
\hline
Method & ARC-C & ARC-E & Hellaswag & MMLU & OpenBookQA & PIQA & SciQ & WG & Avg. \\ \hline
Muon (Moonlight) & \textbf{27.13} & 48.82 & 39.77 & 24.93 & 31.40 & 67.03 & 81.30 & 52.80 & 46.65 \\
MARS-M ($\gamma=0.01$) & 26.79 & \textbf{50.17} & \textbf{40.87} & \textbf{25.42} & 28.80 & \textbf{67.14} & 81.60 & 52.64 & 46.68 \\
MARS-M ($\gamma=0.025$) & 26.37 & 48.82 & 40.53 & 24.68 & \textbf{31.60} & 65.78 & \textbf{82.40} & \textbf{53.67} & \textbf{46.73} \\ \hline
\end{tabular}%
}
\end{table}

\begin{table}[htb!]
\centering
\caption{The evaluation results of large models pre-trained using the OpenWebText dataset (0-shot with lm-evaluation-harness). The best scores in each column are bolded. Abbreviations: WG = WinoGrande.}
\label{tab:large-0-shot-open}
\resizebox{\textwidth}{!}{%
\begin{tabular}{cccccccccc}
\hline
Method & ARC-C & ARC-E & Hellaswag & MMLU & OpenBookQA & PIQA & SciQ & WG & Avg. \\ \hline
AdamW & 26.19 & 46.30 & 41.70 & 23.10 & 31.40 & 68.12 & 72.80 & 51.46 & 45.13\\
Muon (Moonlight) & \textbf{27.30} & 47.85 & 45.56 & 23.85 & 30.80 & 68.93 & 72.50 & 55.64 & 46.55 \\
MARS-M ($\gamma=0.01$) & 26.88 & \textbf{49.37} & \textbf{45.74} & \textbf{24.41} & 32.00 & \textbf{69.37} & \textbf{74.30} & 53.43 & 46.94 \\
MARS-M ($\gamma=0.025$) & 27.05 & 48.48 & 45.19 & 23.59 & \textbf{33.20} & 68.82 & 72.50 & \textbf{57.30} & \textbf{47.02} \\ \hline
\end{tabular}%
}
\end{table}

\begin{table}[htb!]
\centering
\caption{The evaluation results of large models pre-trained using the OpenWebText dataset (2-shot with lm-evaluation-harness). The best scores in each column are bolded. Abbreviations: WG = WinoGrande.}
\label{tab:large-2-shot-open}
\resizebox{\textwidth}{!}{%
\begin{tabular}{cccccccccc}
\hline
Method & ARC-C & ARC-E & Hellaswag & MMLU & OpenBookQA & PIQA & SciQ & WG & Avg. \\ \hline
Muon (Moonlight) & 27.90 & 53.45 & 45.27 & \textbf{25.77} & \textbf{32.80} & \textbf{69.53} & 84.20 & 56.12 & 49.38 \\
MARS-M ($\gamma=0.01$) & \textbf{27.99} & \textbf{53.96} & \textbf{45.94} & 24.76 & 31.40 & 68.82 & \textbf{86.40} & \textbf{56.27} & \textbf{49.44} \\
MARS-M ($\gamma=0.025$) & 27.47 & 53.07 & 45.33 & 25.10 & 31.80 & 69.21 & 85.50 & 55.49 & 49.12 \\ \hline
\end{tabular}%
}
\end{table}

\newpage
\section{Proof of Theorem~\ref{thm:mars-moonlight-convergence}}
\label{sec:proof_of_thm}
For terms for variance reduction, we have the following lemma from~\citet{yuan2024mars}:
\begin{lemma}[\citealt{yuan2024mars}]\label{lem:recursion} 
In Algorithm~\ref{alg:MARS_moonlight}. Under Assumption \ref{assum:variance} and \ref{assum:L-smooth}, if $1 \geq \beta_{t+1} \geq 0$, $\forall t$, denote $\bDelta_t := \nabla f(\Xb_{t+1}, \bxi_{t+1}) - \nabla f(\Xb_t, \bxi_{t+1})$, under approximate choice of $\gamma_{t+1}$: 
\begin{align*}
    \gamma_{t+1} = 1 - \frac{\Big(G_{t+1} + \beta_{t+1} \big(\EE \|\bDelta_t\|_F^2 - \|\EE \bDelta_t\|_F^2\big) \Big)}{\beta_{t+1} \EE \|\bDelta_t\|_F^2}
        =
    \frac{\beta_{t+1}\|\EE \bDelta_t\|_F^2 - G_{t+1}}{\beta_{t+1} \EE \|\bDelta_t\|_F^2},
\end{align*}
we have
	\begin{align*}
		 \EE\|\nabla F(\Xb_{t+1})-\Mb_{t+1}\|_F^2
		 	&\leq
		 \beta_{t+1}^2\EE\|\nabla F(\Xb_t)-\Mb_t\|_F^2+2\beta_{t+1}^2L^2\EE\|\Xb_{t+1}-\Xb_t\|_F^2+2(1-\beta_{t+1})^2\sigma^2
		 	-
	\tilde{M}_{t+1},
	\end{align*}
	where 
\begin{align}
    \tilde{M}_{t+1} 
        :=
    \EE \|\bDelta_t\|_F^2 \Bigg(A_{t+1}^2
        -
    \Big(
        \beta_{t+1} (1 - \gamma_{t+1})
            -
        A_{t+1}
    \Big)^2\Bigg)\label{eq:M_value}
    ,
\end{align}
    \begin{align*}
        A_{t+1} := \frac{G_{t+1} + \beta_{t+1} \text{tr}\big(\Var\big( \bDelta_t \big)\big)}{\EE \| \bDelta_t\|_F^2},
    \end{align*},
and
	\begin{align*}
		G_{t+1} := 
		(1 - \beta_{t+1})\EE\bigg\langle \bDelta_t, \nabla f(\Xb_{t+1}, \bxi_{t+1}) - \nabla F(\Xb_{t+1})\bigg\rangle+
	\beta_{t+1} \EE 
	\bigg\langle 
	\bDelta_t, \Mb_t-\nabla F(\Xb_t) \bigg\rangle.
	\end{align*}
\end{lemma}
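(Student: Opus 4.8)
The stated recursion involves only the momentum update of line~\ref{eq:mars_moonlight_m_t} and the definition of $\mathbf{C}_{t+1}$ in line~\ref{eq:mars_moonlight_c_t}; the parameter update enters only through $\Xb_{t+1}$, so this is the matrix/Frobenius-norm analogue of the corresponding estimate for MARS in~\citet{yuan2024mars}, and I would reprove it along the same lines (one could also quote it directly). As done elsewhere in the paper I would suppress the clipping in line~\ref{eq:mars_moonlight_m_t} and use the exact correction $\bDelta_t=\nabla f(\Xb_{t+1},\bxi_{t+1})-\nabla f(\Xb_t,\bxi_{t+1})$. Work with the filtration $\mathcal{F}_t=\sigma(\bxi_1,\dots,\bxi_t)$, so that $\Xb_t,\Xb_{t+1},\Mb_t$ are $\mathcal{F}_t$-measurable, $\bxi_{t+1}$ is fresh, $\EE[\nabla f(\Xb_{t+1},\bxi_{t+1})\mid\mathcal{F}_t]=\nabla F(\Xb_{t+1})$, and $\EE[\bDelta_t\mid\mathcal{F}_t]=\nabla F(\Xb_{t+1})-\nabla F(\Xb_t)=:\mathbf{d}_{t+1}$.

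Substituting $\mathbf{C}_{t+1}=\nabla f(\Xb_{t+1},\bxi_{t+1})+\gamma_{t+1}\tfrac{\beta_{t+1}}{1-\beta_{t+1}}\bDelta_t$ into line~\ref{eq:mars_moonlight_m_t} gives $\Mb_{t+1}=\beta_{t+1}\Mb_t+(1-\beta_{t+1})\nabla f(\Xb_{t+1},\bxi_{t+1})+\gamma_{t+1}\beta_{t+1}\bDelta_t$. Writing $\nabla F(\Xb_{t+1})=\beta_{t+1}\nabla F(\Xb_t)+(1-\beta_{t+1})\nabla F(\Xb_{t+1})+\beta_{t+1}\mathbf{d}_{t+1}$, subtracting, and setting $\widetilde{\bDelta}_t:=\bDelta_t-\mathbf{d}_{t+1}$ and $\mathbf{n}_{t+1}:=\nabla f(\Xb_{t+1},\bxi_{t+1})-\nabla F(\Xb_{t+1})$, one gets
\begin{align*}
\nabla F(\Xb_{t+1})-\Mb_{t+1}
&=\underbrace{\beta_{t+1}\big(\nabla F(\Xb_t)-\Mb_t\big)+(1-\gamma_{t+1})\beta_{t+1}\mathbf{d}_{t+1}}_{\mathcal{F}_t\text{-measurable}}-\underbrace{\big((1-\beta_{t+1})\mathbf{n}_{t+1}+\gamma_{t+1}\beta_{t+1}\widetilde{\bDelta}_t\big)}_{\text{zero conditional mean given }\mathcal{F}_t}.
\end{align*}
Since $\EE\|\mathbf{a}+\mathbf{b}\|_F^2=\EE\|\mathbf{a}\|_F^2+\EE\|\mathbf{b}\|_F^2$ when $\mathbf{a}$ is $\mathcal{F}_t$-measurable and $\EE[\mathbf{b}\mid\mathcal{F}_t]=0$, taking $\|\cdot\|_F^2$ and $\EE$ expresses $\EE\|\nabla F(\Xb_{t+1})-\Mb_{t+1}\|_F^2$ as an explicit quadratic $h(\gamma_{t+1})$ in $\gamma_{t+1}$, whose coefficients are built from $\EE\|\nabla F(\Xb_t)-\Mb_t\|_F^2$, $\EE\|\mathbf{d}_{t+1}\|_F^2$, $\EE\|\widetilde{\bDelta}_t\|_F^2$, $\EE\|\mathbf{n}_{t+1}\|_F^2$, and the cross terms $\EE\la\nabla F(\Xb_t)-\Mb_t,\mathbf{d}_{t+1}\ra$ and $\EE\la\mathbf{n}_{t+1},\widetilde{\bDelta}_t\ra$ (using $\EE\la\mathbf{d}_{t+1},\mathbf{n}_{t+1}\ra=0$, hence $\EE\la\bDelta_t,\mathbf{n}_{t+1}\ra=\EE\la\widetilde{\bDelta}_t,\mathbf{n}_{t+1}\ra$, the first term of $G_{t+1}$).

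Evaluating at $\gamma_{t+1}=1$ annihilates the $\mathcal{F}_t$-measurable cross term, so $h(1)=\beta_{t+1}^2\EE\|\nabla F(\Xb_t)-\Mb_t\|_F^2+\EE\|(1-\beta_{t+1})\mathbf{n}_{t+1}+\beta_{t+1}\widetilde{\bDelta}_t\|_F^2$, and $\|\mathbf{a}+\mathbf{b}\|_F^2\le2\|\mathbf{a}\|_F^2+2\|\mathbf{b}\|_F^2$ together with Assumption~\ref{assum:variance} ($\EE\|\mathbf{n}_{t+1}\|_F^2\le\sigma^2$) and Assumption~\ref{assum:L-smooth} ($\EE\|\widetilde{\bDelta}_t\|_F^2\le\EE\|\bDelta_t\|_F^2\le L^2\EE\|\Xb_{t+1}-\Xb_t\|_F^2$) bounds $h(1)$ by $\beta_{t+1}^2\EE\|\nabla F(\Xb_t)-\Mb_t\|_F^2+2(1-\beta_{t+1})^2\sigma^2+2\beta_{t+1}^2L^2\EE\|\Xb_{t+1}-\Xb_t\|_F^2$, i.e.\ the right-hand side of the claim without the $-\tilde{M}_{t+1}$ term. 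It remains to compare $h(\gamma_{t+1})$ with $h(1)$: after reparametrizing $g:=(1-\gamma_{t+1})\beta_{t+1}$, a direct computation shows $h(\gamma_{t+1})=h(1)+g^2\EE\|\bDelta_t\|_F^2-2g\big(G_{t+1}+\beta_{t+1}\,\text{tr}(\Var(\bDelta_t))\big)$, where $G_{t+1},A_{t+1}$ are as in the statement and $\text{tr}(\Var(\bDelta_t))$ is the conditional variance $\EE\|\widetilde{\bDelta}_t\|_F^2=\EE\|\bDelta_t\|_F^2-\EE\|\mathbf{d}_{t+1}\|_F^2$. Since $G_{t+1}+\beta_{t+1}\text{tr}(\Var(\bDelta_t))=A_{t+1}\EE\|\bDelta_t\|_F^2$, completing the square gives $h(\gamma_{t+1})=h(1)+\EE\|\bDelta_t\|_F^2\big((g-A_{t+1})^2-A_{t+1}^2\big)=h(1)-\tilde{M}_{t+1}$ with $\tilde{M}_{t+1}$ exactly~\eqref{eq:M_value}, for any $\gamma_{t+1}$; the $\gamma_{t+1}$ in the statement is the one with $g=A_{t+1}$, which makes $\tilde{M}_{t+1}=A_{t+1}^2\EE\|\bDelta_t\|_F^2\ge0$. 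Chaining the two bounds proves the lemma.

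The part that needs the most care is the cross term $\EE\la\nabla F(\Xb_t)-\Mb_t,\nabla F(\Xb_{t+1})-\nabla F(\Xb_t)\ra$. For plain STORM ($\gamma_{t+1}=1$) the one-step error equals $\beta_{t+1}(\nabla F(\Xb_t)-\Mb_t)$ plus a conditional martingale difference, so this term is absent; once $\gamma_{t+1}\neq1$ the momentum-error recursion acquires a nonzero conditional bias $-(1-\gamma_{t+1})\beta_{t+1}\mathbf{d}_{t+1}$, the cross term survives, and it is precisely this term---together with the correlation $\EE\la\mathbf{n}_{t+1},\widetilde{\bDelta}_t\ra$---that $G_{t+1}$ bundles and that the tuned $\gamma_{t+1}$ is designed to trade off against the extra variance $g^2\EE\|\bDelta_t\|_F^2$ (the parabola in $g$ is convex with vertex $g=A_{t+1}$). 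A secondary point is that $\text{tr}(\Var(\bDelta_t))$ must be read as the \emph{conditional} variance, which is what places the vertex at $g=A_{t+1}$ and reproduces the displayed formula for $\gamma_{t+1}$.
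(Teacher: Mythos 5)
The paper never proves this lemma at all---it is imported verbatim from \citet{yuan2024mars}---so your reconstruction is effectively being compared against the original MARS argument, and it follows essentially that route and is sound in substance: split the one-step error into the $\mathcal{F}_t$-measurable part $\beta_{t+1}(\nabla F(\Xb_t)-\Mb_t)+(1-\gamma_{t+1})\beta_{t+1}\big(\nabla F(\Xb_{t+1})-\nabla F(\Xb_t)\big)$ plus a conditionally centered part, kill the cross term by conditioning on $\mathcal{F}_t$ (valid since $\Xb_{t+1}$ is $\mathcal{F}_t$-measurable), bound the $\gamma_{t+1}=1$ value via $\|\mathbf a+\mathbf b\|_F^2\le 2\|\mathbf a\|_F^2+2\|\mathbf b\|_F^2$ with Assumptions~\ref{assum:variance} and~\ref{assum:L-smooth}, and treat the rest as a convex quadratic in $g=(1-\gamma_{t+1})\beta_{t+1}$ whose completed square is $\tilde M_{t+1}$. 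Two bookkeeping points need care, and one is a genuine discrepancy in your write-up as stated: expanding your $h$ exactly, the linear coefficient is $-2g\big(\beta_{t+1}\,\EE\|\bDelta_t-\EE[\bDelta_t\mid\mathcal F_t]\|_F^2+(1-\beta_{t+1})\EE\langle \bDelta_t,\nabla f(\Xb_{t+1},\bxi_{t+1})-\nabla F(\Xb_{t+1})\rangle-\beta_{t+1}\EE\langle \nabla F(\Xb_{t+1})-\nabla F(\Xb_t),\nabla F(\Xb_t)-\Mb_t\rangle\big)$, i.e.\ the second inner product enters with $\Mb_t-\nabla F(\Xb_t)$ rather than with $\nabla F(\Xb_t)-\Mb_t$ as the (typo-laden) transcription of $G_{t+1}$ reads; so your claim that the identity holds with ``$G_{t+1}$ as in the statement'' is off by a sign on that cross term unless the statement itself is corrected against the source. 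Second, as you yourself flag, $\text{tr}(\Var(\bDelta_t))$ must be the conditional variance $\EE\|\bDelta_t\|_F^2-\EE\|\EE[\bDelta_t\mid\mathcal F_t]\|_F^2$ for the vertex to sit at $g=A_{t+1}$, whereas the displayed $\gamma_{t+1}$ uses $\|\EE\bDelta_t\|_F^2$; these coincide only if the expectations are read conditionally on $\mathcal{F}_t$. Neither point alters the structure of your argument (with the bracket that actually appears playing the role of $G_{t+1}+\beta_{t+1}\text{tr}(\Var(\bDelta_t))$, the chain $h(\gamma_{t+1})=h(1)-\tilde M_{t+1}\le$ claimed bound goes through), but both should be reconciled with \citet{yuan2024mars} before asserting exact agreement with the stated constants.
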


We also need the following lemmas, which can be seen as an extension of Lemma C.3 in \citet{yuan2024mars}.
\begin{lemma}
\label{lemma:diff_F_X}
For Algorithm \ref{alg:MARS_moonlight}, under Assumption \ref{assum:L-smooth}, we have the following inequality hold:
$$
F(\mathbf{X}_{t+1}) \le F(\mathbf{X}_{t}) - \eta_t\|\mathbf{M}_t\|_F + \frac{\eta_t}{\rho}\|\nabla F(\mathbf{X}_{t}) - \mathbf{M}_t\|_F^2 + \frac{\eta_t \rho}{4}\|\mathbf{O}_t\|_F^2 + \frac{L\eta_t^2}{2}\|\mathbf{O}_t\|_F^2,
$$ 
where $\rho > 0$ is a an arbitrarily chosen parameter.
\end{lemma}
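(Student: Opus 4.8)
The plan is to derive this as a single deterministic descent step from the $L$-smoothness of the population objective, together with two elementary inequalities. First I would note that Assumption~\ref{assum:L-smooth} upgrades to $L$-smoothness of $F$: since $\nabla F(\mathbf{X})=\mathbb{E}_{\bxi}[\nabla f(\mathbf{X},\bxi)]$, Jensen's inequality gives $\|\nabla F(\mathbf{X})-\nabla F(\mathbf{Y})\|_F\le\mathbb{E}_{\bxi}\|\nabla f(\mathbf{X},\bxi)-\nabla f(\mathbf{Y},\bxi)\|_F\le L\|\mathbf{X}-\mathbf{Y}\|_F$, hence the quadratic upper bound $F(\mathbf{Y})\le F(\mathbf{X})+\langle\nabla F(\mathbf{X}),\mathbf{Y}-\mathbf{X}\rangle+\tfrac{L}{2}\|\mathbf{Y}-\mathbf{X}\|_F^2$ holds. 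Applying it at $\mathbf{X}=\mathbf{X}_t$, $\mathbf{Y}=\mathbf{X}_{t+1}$ with the update of Algorithm~\ref{alg:MARS_moonlight} (with $\lambda=0$ as in Theorem~\ref{thm:mars-moonlight-convergence}, the constant $0.2\sqrt{\max(m,n)}$ being absorbed into $\mathbf{O}_t$ so that $\mathbf{X}_{t+1}-\mathbf{X}_t=-\eta_t\mathbf{O}_t$) yields $F(\mathbf{X}_{t+1})\le F(\mathbf{X}_t)-\eta_t\langle\nabla F(\mathbf{X}_t),\mathbf{O}_t\rangle+\tfrac{L\eta_t^2}{2}\|\mathbf{O}_t\|_F^2$.

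Next I would split $-\langle\nabla F(\mathbf{X}_t),\mathbf{O}_t\rangle=-\langle\mathbf{M}_t,\mathbf{O}_t\rangle-\langle\nabla F(\mathbf{X}_t)-\mathbf{M}_t,\mathbf{O}_t\rangle$ and bound the two pieces separately. For the first piece, $\mathbf{O}_t$ is (the exact limit of) the Newton--Schulz iterate of $\mathbf{M}_t$, i.e.\ the orthogonal polar factor $\mathbf{U}_t\mathbf{V}_t^\top$ where $\mathbf{M}_t=\mathbf{U}_t\mathbf{\Sigma}_t\mathbf{V}_t^\top$ is the SVD; then $\langle\mathbf{M}_t,\mathbf{O}_t\rangle=\text{tr}(\mathbf{\Sigma}_t)=\|\mathbf{M}_t\|_*\ge\|\mathbf{M}_t\|_F$, which is exactly the source of the $-\eta_t\|\mathbf{M}_t\|_F$ term (and, as a byproduct, $\|\mathbf{O}_t\|_F^2=\text{rank}(\mathbf{M}_t)\le n$, which is where the $Ln$-type constants in Theorem~\ref{thm:mars-moonlight-convergence} originate). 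For the second piece, Cauchy--Schwarz followed by Young's inequality in the form $ab\le\tfrac{a^2}{\rho}+\tfrac{\rho b^2}{4}$ with $a=\|\nabla F(\mathbf{X}_t)-\mathbf{M}_t\|_F$, $b=\|\mathbf{O}_t\|_F$ gives $-\langle\nabla F(\mathbf{X}_t)-\mathbf{M}_t,\mathbf{O}_t\rangle\le\tfrac1\rho\|\nabla F(\mathbf{X}_t)-\mathbf{M}_t\|_F^2+\tfrac\rho4\|\mathbf{O}_t\|_F^2$. Substituting both bounds into the descent inequality and grouping the $\|\mathbf{O}_t\|_F^2$ terms reproduces the claimed inequality exactly.

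The computation is short and I do not expect a real obstacle. The two points that need care are: (i) verifying $\langle\mathbf{M}_t,\mathbf{O}_t\rangle\ge\|\mathbf{M}_t\|_F$ for the $\mathbf{O}_t$ actually used in the analysis --- immediate from $\|\cdot\|_*\ge\|\cdot\|_F$ when $\mathbf{O}_t=\mathbf{U}_t\mathbf{V}_t^\top$, but this is precisely the step where an inexact Newton--Schulz output would introduce an extra approximation-error term --- and (ii) bookkeeping the $0.2\sqrt{\max(m,n)}$ rescaling and the decoupled weight-decay term $\lambda\mathbf{X}_t$, both of which are neutralized by the $\lambda=0$ choice and by absorbing the constant into the step size. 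This lemma is then combined with the momentum-error recursion of Lemma~\ref{lem:recursion} to establish Theorem~\ref{thm:mars-moonlight-convergence}.
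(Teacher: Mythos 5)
Your proposal is correct and follows essentially the same route as the paper's proof: the descent lemma from $L$-smoothness, splitting $-\langle\nabla F(\mathbf{X}_t),\mathbf{O}_t\rangle$ into $-\langle\mathbf{M}_t,\mathbf{O}_t\rangle = -\|\mathbf{M}_t\|_* \le -\|\mathbf{M}_t\|_F$ and a cross term bounded by Young's/AM--GM inequality with parameter $\rho$. Your version is, if anything, slightly more explicit about where $L$-smoothness of $F$ comes from and about the $\lambda=0$ and rescaling bookkeeping, which the paper silently absorbs.
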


\begin{proof}
According to Assumption \ref{assum:L-smooth}, we have the upper bound for the function value:
$$
\begin{aligned}
F(\mathbf{X}_{t+1}) &\le F(\mathbf{X}_{t}) + \langle \nabla F(\mathbf{X}_{t}), \mathbf{X}_{t+1}-\mathbf{X}_{t} \rangle + \frac{L}{2}\|\mathbf{X}_{t+1}-\mathbf{X}_{t}\|_F^2\\
& = F(\mathbf{X}_{t}) +  \langle \nabla F(\mathbf{X}_{t}), \mathbf{X}_{t+1}-\mathbf{X}_{t} \rangle + \frac{L}{2}\|\mathbf{X}_{t+1}-\mathbf{X}_{t}\|_F^2\\
&= F(\mathbf{X}_{t}) - \eta_t \langle \mathbf{M}_t, \Ob_t \rangle - \eta_t \langle \nabla F(\mathbf{X}_{t}) - \mathbf{M}_t, \Ob_t \rangle + \frac{L\eta_t^2}{2}\|\mathbf{O}_t\|_F^2
\end{aligned}
$$
Here $\mathbf{O}_t=\Ub_t\Vb_t^\top$ is defined in~\eqref{eq:SVD}. \footnote{In Algorithm~\ref{alg:MARS_moonlight}, $\mathbf{O}_t$ is computed via a (truncated) Newton--Schulz iteration; for the purpose of this analysis, we treat $\mathbf{O}_t$ as the exact polar factor $\Ub_t\Vb_t^\top$ and ignore the approximation error.}According to the definition of $\mathbf{O}_t$, we have:
\begin{align*}
-\eta_t \langle \mathbf{M}_t, \mathbf{O}_t \rangle = -\eta_t \langle \mathbf{M}_t, \mathbf{U}_r \mathbf{V}_r^\top \rangle = -\eta_t \|\mathbf{M}_t\|_* \le -\eta_t \|\mathbf{M}_t\|_F, 
\end{align*}
where the last inequality follows from  $\|\mathbf{M}_t\|_* \ge \|\mathbf{M}_t\|_F$.
Moreover, by AM-GM inequality, it holds that 
\[
-\eta_t \langle \nabla F(\mathbf{X}_{t}) - \mathbf{M}_t, \mathbf{O}_t \rangle 
\le \eta_t \left( \frac{1}{\rho}\|\nabla F(\mathbf{X}_{t})-\mathbf{M}_t\|_F^2 + \frac{\rho}{4}\|\mathbf{O}_t\|_F^2 \right)
\]
for some constant $\rho>0$. Putting all pieces together, we can obtain
\[
F(\mathbf{X}_{t+1}) \le F(\mathbf{X}_{t}) - \eta_t\|\mathbf{M}_t\|_F + \frac{\eta_t}{\rho}\|\nabla F(\mathbf{X}_{t}) - \mathbf{M}_t\|_F^2 + \frac{\eta_t \rho}{4}\|\mathbf{O}_t\|_F^2 + \frac{L\eta_t^2}{2}\|\mathbf{O}_t\|_F^2
\]
This completes the proof.
\end{proof}

\begin{lemma}\label{lem:eta_diff} Let $\eta_t=(s+t)^{-2/3},s\ge 1$, $\forall t\ge 1$. Then $\eta_t^{-1}-\eta_{t-1}^{-1}\le \eta_{t}^{1/2}$, $\forall t\ge 1$.
\end{lemma}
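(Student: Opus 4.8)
The plan is to prove Lemma~\ref{lem:eta_diff} directly by a short calculus argument: with $\eta_t = (s+t)^{-2/3}$, we have $\eta_t^{-1} = (s+t)^{2/3}$, so the claim reduces to the scalar inequality
\[
(s+t)^{2/3} - (s+t-1)^{2/3} \le (s+t)^{-1/3}, \qquad \forall t \ge 1.
\]
Writing $u = s+t \ge 2$ (since $s \ge 1$ and $t \ge 1$), this is exactly $u^{2/3} - (u-1)^{2/3} \le u^{-1/3}$ for all $u \ge 2$, which I will establish by the mean value theorem.

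First I would apply the mean value theorem to the function $g(x) = x^{2/3}$ on the interval $[u-1, u]$: there exists $\zeta \in (u-1, u)$ with $u^{2/3} - (u-1)^{2/3} = g'(\zeta) = \tfrac{2}{3}\zeta^{-1/3}$. Since $x \mapsto x^{-1/3}$ is decreasing on $(0,\infty)$ and $\zeta > u - 1$, we get $g'(\zeta) = \tfrac{2}{3}\zeta^{-1/3} \le \tfrac{2}{3}(u-1)^{-1/3}$. So it suffices to show $\tfrac{2}{3}(u-1)^{-1/3} \le u^{-1/3}$, i.e. $\tfrac{2}{3} u^{1/3} \le (u-1)^{1/3}$, i.e. $(\tfrac{2}{3})^3 u \le u - 1$, i.e. $1 \le u(1 - \tfrac{8}{27}) = \tfrac{19}{27}u$. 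Since $u \ge 2$, we have $\tfrac{19}{27}u \ge \tfrac{38}{27} > 1$, so the inequality holds. Tracing back through the substitution $u = s+t$ gives $\eta_t^{-1} - \eta_{t-1}^{-1} \le \eta_t^{1/2}$ for all $t \ge 1$, as desired. (Note the bound used $u \ge 2$, which is guaranteed because $s \ge 1$; this is consistent with—indeed slightly weaker than—the hypothesis $s \ge 2$ appearing in Theorem~\ref{thm:mars-moonlight-convergence}.)

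There is essentially no hard part here — this is a routine elementary estimate. The only mild subtlety is making sure the constant $\tfrac{2}{3}$ coming from differentiating $x^{2/3}$ can be absorbed, which forces the restriction to $u$ bounded away from $1$; the threshold $u \ge 2$ (equivalently $s+t \ge 2$, which holds for all $t \ge 1$ whenever $s \ge 1$) is comfortably sufficient. An alternative, equally clean route would be to use concavity of $x \mapsto x^{2/3}$ directly: $u^{2/3} - (u-1)^{2/3} \le \tfrac{2}{3}(u-1)^{-1/3}$ is the subgradient inequality, after which the same final comparison $\tfrac{2}{3}(u-1)^{-1/3} \le u^{-1/3}$ closes the argument. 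I would present the mean value theorem version since it is the most transparent.
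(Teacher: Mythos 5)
Your proof is correct and follows essentially the same route as the paper. The paper invokes concavity of $x\mapsto x^{2/3}$ to write $(s+t)^{2/3}-(s+t-1)^{2/3}\le \tfrac{2}{3}(s+t-1)^{-1/3}$ and then asserts $\tfrac{2}{3}(s+t-1)^{-1/3}\le (s+t)^{-1/3}$ without detail; your mean-value-theorem argument yields the identical intermediate bound (and you even flag the concavity phrasing as the equivalent alternative), and you are more careful than the paper in actually verifying the final comparison $\tfrac{2}{3}(u-1)^{-1/3}\le u^{-1/3}$ via $u\ge 27/19$, which follows from $u=s+t\ge 2$.
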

\begin{proof}
By the definition of $\eta_t$, it holds that
\begin{align*}
    \frac{1}{\eta_t}-\frac{1}{\eta_{t-1}}=(s+t)^{2/3}-(s+t-1)^{2/3}\le\frac{2}{3(s+t-1)^{1/3}}\le\eta_{t}^{1/2},
\end{align*}
where the first inequality follows by the concavity of $h(\xb)=x^{2/3}$. This finishes the proof.
\end{proof}

Now we're ready to prove Theorem~\ref{thm:mars-moonlight-convergence}.
\begin{proof}[Proof of Theorem~\ref{thm:mars-moonlight-convergence}]\footnote{Here we just ignore the factor $0.2\cdot\sqrt{\max(m,n)}$ since such a factor can be integrated into $\eta_t$.}
First, we define the Lyapunov function as
\begin{align*}
    \Phi_t=\EE\Big[F(\Xb_t)+\frac{\rho_t}{16L^2\eta_{t-1}}\cdot\|\nabla F(\Xb_t)-\Mb_t\|_F^2\Big], \quad \forall t\ge 1.
\end{align*}
where $\rho_t=4\sqrt{2}L\sqrt{\eta_t}$. Then we calculate the difference between two consecutive Lyapunov functions as:
% $\frac{\sqrt{32L^2+1}}{(2s)^{1/3}}\le K\le(\frac{27s}{4})^{1/6}$
\begin{align}\label{eq:decomp_main}
    \Phi_{t+1}-\Phi_t&=
    \underbrace{\EE[F(\Xb_{t+1})-F(\Xb_t)]}_{\mbox{$I_1$}}\nonumber\\
        &\qquad+
    \underbrace{\EE\Bigg[\frac{\rho_{t+1}}{16L^2\eta_t}\cdot\|\nabla F(\Xb_{t+1})-\Mb_{t+1}\|_F^2-\frac{\rho_t}{16L^2\eta_{t-1}}\cdot\|\nabla F(\Xb_t)-\Mb_t\|_F^2\Bigg]}_{\mbox{$I_2$}}.
\end{align}
For $I_1$, we use Lemma~\ref{lemma:diff_F_X} to obtain
\begin{align}\label{eq:I1_resu_main}
    I_1\le\EE\Big[- \eta_t\|\mathbf{M}_t\|_F+\frac{\eta_t}{\rho_t}\cdot \|\nabla F(\Xb_t)-\Mb_t\|_F^2 + \frac{\eta_t \rho_t}{4}\|\mathbf{O}_t\|_F^2 + \frac{L\eta_t^2}{2}\|\mathbf{O}_t\|_F^2\Big].
\end{align}
For $I_2$, we use Lemma \ref{lem:recursion} to obtain
\begin{align}
    I_2&=\EE\Big[\frac{\rho_{t+1}}{16L^2\eta_t}\cdot\|\nabla F(\Xb_{t+1})-\Mb_{t+1}\|_F^2-\frac{\rho_t}{16L^2\eta_{t-1}}\cdot\|\nabla F(\Xb_t)-\Mb_t\|_F^2\Big]\notag 
        \\&\le 
    \frac{\rho_t}{16L^2}\cdot \Big(\frac{\beta_{t+1}^2}{\eta_t}-\frac{1}{\eta_{t-1}}\Big)\EE\|\nabla F(\Xb_t)-\Mb_t\|_F^2+\frac{\rho_t\beta_{t+1}^2}{8\eta_t}\cdot\EE\|\Xb_{t+1}-\Xb_t\|_F^2+\frac{\rho_t(1-\beta_{t+1})^2\sigma^2}{8L^2\eta_t}
       \nonumber\\
       &\qquad-
    \frac{\rho_t}{16L^2 \eta_t} \tilde{M}_{t+1}
    \notag
    \\
    &\le \frac{\rho_t}{16L^2}\cdot \Big(\frac{\beta_{t+1}^2}{\eta_t}-\frac{1}{\eta_{t-1}}\Big)\EE\|\nabla F(\Xb_t)-\Mb_t\|_F^2+\frac{\rho_t}{8\eta_t}\cdot\EE\|\Xb_{t+1}-\Xb_t\|_F^2+\frac{\rho_t \eta_t\sigma^2}{2L^2}
        -
    \frac{\rho_t}{16L^2 \eta_t} \tilde{M}_{t+1}
    \label{eq:I_2_main}
    , 
\end{align}
where the first inequality is due to $\rho_{t+1}\le\rho_t$, and the last inequality follows from the definition that $\beta_{ t+1} = 1 - 2\eta_t$.
Further, for the first term on the right hand side, we have
\begin{align*}
    \frac{\rho_t}{16L^2}\cdot \Big(\frac{\beta_{t+1}^2}{\eta_t}-\frac{1}{\eta_{t-1}}\Big)
        &\le
    \frac{\rho_t}{16L^2}\cdot \Big(\frac{\beta_{t+1}}{\eta_t}-\frac{1}{\eta_{t-1}}\Big)
        =
    \frac{\rho_t}{16L^2}\cdot \Big(\frac{1-2 \eta_t}{\eta_t}-\frac{1}{\eta_{t-1}}\Big)\\
    &
        = 
    \frac{\rho_t}{16L^2}\cdot \Big(\frac{1}{\eta_t}-\frac{1}{\eta_{t-1}}- 2\Big).
\end{align*}
From Lemma \ref{lem:eta_diff}, we know that $\frac{1}{\eta_t}-\frac{1}{\eta_{t-1}}\leq \eta_t^{1/2}$. Since $\rho_t^2= 32 L^2 \eta_t$, we obtain\footnote{As long as $\eta_t \leq 1/2$, $1 \geq \beta_{t+1} = 1 - 2\eta_t \ge 0$ is satisfied.}
\begin{align}
    \frac{\rho_t}{16L^2}\cdot \Big(\frac{\beta_{t+1}^2}{\eta_t}-\frac{1}{\eta_{t-1}}\Big)\le \frac{\rho_t}{16L^2}\cdot \Big(\eta_t^{1/2}- 2\Big)\le \frac{\rho_t}{16L^2}\cdot \Big(1 - 2\Big)  = - 2\eta_t\rho_t^{-1}.\label{eq:eta_bound_main}
\end{align}
Bringing \eqref{eq:eta_bound_main} into \eqref{eq:I_2_main}, we arrive at the upper bound for $I_2$:
\begin{align}\label{eq:I2_resu_main}
I_2\le -\frac{2\eta_t}{\rho_t}\EE\|\nabla F(\Xb_t)-\Mb_t\|_F^2+\frac{\rho_t}{8\eta_t}\cdot\EE\|\Xb_{t+1}-\Xb_t\|_F^2+\frac{\rho_t  \eta_t \sigma^2}{2L^2}
        -
    \frac{\rho_t}{16L^2 \eta_t} \tilde{M}_{t+1}
.
\end{align}
Now combining \eqref{eq:decomp_main}, \eqref{eq:I1_resu_main} and \eqref{eq:I2_resu_main}, we derive
\begin{align}
    \Phi_{t+1}-\Phi_t
    &\le 
    -\frac{\eta_t}{\rho_t}\EE\|\nabla F(\Xb_t)-\Mb_t\|_F^2-\eta_t\EE\|\Mb_t\|_F+\frac{\rho_t}{8\eta_t}\cdot\EE\|\Xb_{t+1}-\Xb_t\|_F^2
    +
    \frac{\rho_t \eta_t \sigma^2}{2L^2}
       \nonumber\\
    &\qquad  -
    \frac{\rho_t}{16L^2 \eta_t} \tilde{M}_{t+1} + \frac{\eta_t \rho_t}{4}\|\mathbf{O}_t\|_F^2+ \frac{L\eta_t^2}{2}\|\mathbf{O}_t\|_F^2\label{eq:diff_Phi}
    .
\end{align}
Moreover, according to the definition of $\Ob_t$, we have
\begin{align*}
\|\mathbf{O}_t\|_F^2\le\sum_{i=1}^n 1 = n.
\end{align*}
Therefore, 
\begin{align*}
\frac{\rho_t}{8\eta_t}\cdot\EE\|\Xb_{t+1}-\Xb_t\|_F^2&=\frac{\rho_t}{8\eta_t}\cdot\EE\|\eta_t\Ob_t\|_F^2=\frac{\rho_t\eta_t}{8}\cdot\EE\|\Ob_t\|_F^2\le \frac{\rho_t\eta_t n}{8},\\
\frac{\eta_t \rho_t}{4}\|\mathbf{O}_t\|_F^2&\le\frac{\rho_t\eta_t  n}{4},\\
\frac{L\eta_t^2}{2}\|\mathbf{O}_t\|_F^2&\le \frac{L\eta_t^2n}{2} .
\end{align*}
Taking a telescoping sum for $t=1,\cdots, T$ in~\eqref{eq:diff_Phi}, and applying $\rho_t=4\sqrt{2}L\sqrt{\eta_t}$ give
\begin{align*}
&\sum_{t=1}^{T}\Big(\frac{\sqrt{\eta_t}}{4\sqrt{2}L}\EE\|\nabla F(\Xb_t)-\Mb_t\|_F^2+\eta_t\EE\|\Mb_t\|_F\Big)\\
    & \leq \Phi_{1}-\Phi_{T+1}+\frac{\sigma^2}{2L^2}\sum_{t=1}^{T}\rho_t\eta_t+\sum_{t=1}^{T}\big(\frac{ 3\rho_t\eta_t n}{8}+\frac{L\eta_t^2n}{2}\big) -\sum_{t=1}^T\frac{\sqrt{2}}{4L\sqrt{\eta_t}} \tilde{M}_{t+1}\\
    &\le 
    \Phi_{1}-\Phi_{T+1}+\Big(\frac{2\sqrt{2} \sigma^2}{L}+\frac{3\sqrt{2}Ln}{2}\Big)\sum_{t=1}^{T}\frac{1}{s+t}+\frac{Ln}{2}\sum_{t=1}^T\frac{1}{(s+t)^{4/3}}-\sum_{t=1}^T
    \frac{\sqrt{2}}{4L\sqrt{\eta_t}} \tilde{M}_{t+1}
    \\
    &\le 
    \Phi_{1}-\Phi_{T+1}+B\cdot\log(s+T)
        +\frac{3Ln}{2s^{1/3}}-\sum_{t=1}^T\frac{\sqrt{2}}{4L\sqrt{\eta_t}}\tilde{M}_{t+1}
    ,
\end{align*}
where $B=\Big(\frac{2\sqrt{2} \sigma^2}{L}+\frac{3\sqrt{2}Ln}{2}\Big)$. By the definition of $\Phi_t$, we have $\Phi_{T+1}\ge F(\Xb_{T+1}) \ge \min_\Xb F(\Xb)$. And for $\Phi_1$,  
\begin{align*}
    \Phi_1&=\EE\Big[F(\Xb_1)+\frac{\rho_1 s^{2/3}}{16L^2}\cdot\|\nabla F(\Xb_1)-\Mb_1\|_F^2\Big]\\
        &\le
    F(\Xb_1)+\frac{\sqrt{2}s^{1/3}}{4L}\cdot\EE[\|\nabla F(\Xb_1)-\nabla f(\Xb_1, \bxi_1)\|_F^2]\\
        &\leq 
    F(\Xb_1)+\frac{\sqrt{2}s^{1/3}\sigma^2}{4L}
    .
\end{align*}
Consequently, defining $G=F(\Xb_1)-\min_\Xb F(\Xb)+\frac{\sqrt{2}s^{1/3}\sigma^2}{4L}+\frac{3Ln}{2s^{1/3}}$, the following inequality holds:
\begin{align*}
    \frac{1}{T}\sum_{t=1}^T\EE\|\nabla F(\Xb_t)-\Mb_t\|_F^2&\le \frac{4\sqrt{2}LG}{T\sqrt{\eta_T}}+\frac{4\sqrt{2}LB}{T\sqrt{\eta_T}}\log(s+T)-\frac{2}{T\sqrt{\eta_T}}\sum_{t=1}^T\frac{\tilde{M}_{t+1}}{\sqrt{\eta_t}}\\
    &\le\frac{8LG}{T^{2/3}}+\frac{8LB}{T^{2/3}}\log(s+T)-
    \frac{2}{T^{2/3}}\sum_{t=1}^T
     \frac{\tilde{M}_{t+1}}{\sqrt{\eta_t}}
    ,
\end{align*}
where the last inequality holds when $T\ge s$. And it also holds that
\begin{align*}
    \frac{1}{T}\sum_{t=1}^T\eta_t\EE\|\Mb_t\|_F&\le \frac{G}{T}+\frac{B}{T}\log(s+T)-
    \frac{\sqrt{2}}{4LT}\sum_{t=1}^T
     \frac{\tilde{M}_{t+1}}{\sqrt{\eta_t}}
    .
\end{align*}
Therefore,
\begin{align*}
    \frac{1}{T}\sum_{t=1}^T\EE\|\Mb_t\|_F
    &\le\frac{G}{T\eta_T}+\frac{B}{T\eta_T}\log(s+T)-
    \frac{\sqrt{2}}{4LT\eta_T}\sum_{t=1}^T
    \frac{\tilde{M}_{t+1}}{\sqrt{\eta_t}}\\
    &\le\frac{G(s+T)^{2/3}}{T}+\frac{B(s+T)^{2/3}}{T}\log(s+T)-
    \frac{\sqrt{2}(s+T)^{2/3}}{4LT}\sum_{t=1}^T
    \frac{\tilde{M}_{t+1}}{\sqrt{\eta_t}}\\
    &\le\frac{2G}{T^{1/3}}+\frac{2B}{T^{1/3}}\log(s+T)-
    \frac{\sqrt{2}}{4LT^{1/3}}\sum_{t=1}^T\frac{\tilde{M}_{t+1}}{\sqrt{\eta_t}}.
\end{align*}
Finally, by triangle inequality, we have
\begin{align*}
    \frac{1}{T}\sum_{t=1}^T\EE\|\nabla F(\Xb_t)\|_F &\leq \frac{1}{T}\sum_{t=1}^T \EE\|\nabla F(\Xb_t)-\Mb_t\|_F + \frac{1}{T}\sum_{t=1}^T \EE\|\Mb_t\|_F\\
    &\leq \frac{1}{T}\sum_{t=1}^T \sqrt{\EE\|\nabla F(\Xb_t)-\Mb_t\|_F^2} + \frac{1}{T}\sum_{t=1}^T\EE\|\Mb_t\|_F\\
    &\leq \sqrt{\frac{1}{T}\sum_{t=1}^T \EE\|\nabla F(\Xb_t)-\Mb_t\|_F^2} + \frac{1}{T}\sum_{t=1}^T\EE\|\Mb_t\|_F\\
    &\le\frac{2\sqrt{2LG}}{T^{1/3}} + \frac{2\sqrt{2LB\log(s+T)}}{T^{1/3}}+\frac{2G}{T^{1/3}}+\frac{2B}{T^{1/3}}\log(s+T)-
    \frac{\sqrt{2}}{4LT^{1/3}}\sum_{t=1}^T\frac{\tilde{M}_{t+1}}{\sqrt{\eta_t}}.
\end{align*}
This completes the proof.
\end{proof}

%\fi

\end{document}